\newtheorem{theorem}{Theorem}
\newtheorem{lemma}{Lemma}[]
\definecolor{myred}{RGB}{0,0,0}
\definecolor{myblue}{RGB}{0,0,0}
\definecolor{mygreen}{RGB}{0,0,0}
\definecolor{myblue2}{RGB}{0,0,0}
\def\BibTeX{{\rm B\kern-.05em{\sc i\kern-.025em b}\kern-.08em
    T\kern-.1667em\lower.7ex\hbox{E}\kern-.125emX}}
\begin{document}

\title{FedHiP: Heterogeneity-Invariant Personalized Federated Learning Through Closed-Form Solutions

%
%








}

\vspace{-0.5cm}
\author{\IEEEauthorblockN{Jianheng Tang$^{\scriptstyle{1}}$, Zhirui Yang$^{\scriptstyle{2}}$, Jingchao Wang$^{\scriptstyle{1}}$, Kejia Fan$^{\scriptstyle{2}}$,
Jinfeng Xu$^{\scriptstyle{3}}$,
}
\IEEEauthorblockN{Huiping Zhuang$^{\scriptstyle{4}}$, Anfeng Liu$^{\scriptstyle{2}}$, Houbing Herbert Song$^{\scriptstyle{5}}$, Leye Wang$^{\scriptstyle{1}}$, Yunhuai Liu$^{\scriptstyle{1}}$}
\vspace{0.1cm}
 \IEEEauthorblockA{1. PKU, China \quad  2. CSU, China  \quad  3. HKU, China \quad 4. SCUT, China \quad 5. UMBC, USA
 }
}

\maketitle
\pagestyle{empty}

\begin{abstract}
Lately, Personalized Federated Learning (PFL) has emerged as a prevalent paradigm to deliver personalized models by collaboratively training while simultaneously adapting to each client's local applications.
Existing PFL methods typically face a significant challenge due to the ubiquitous data heterogeneity (i.e., non-IID data) across clients, which severely hinders convergence and degrades performance.
We identify that the root issue lies in the long-standing reliance on gradient-based updates, which are inherently sensitive to non-IID data.
To fundamentally address this issue and bridge the research gap, in this paper, we propose a \underline{H}eterogeneity-\underline{i}nvariant \underline{P}ersonalized \underline{Fed}erated learning scheme, named FedHiP, through analytical (i.e., closed-form) solutions to avoid gradient-based updates.
Specifically, we exploit the trend of self-supervised pre-training, leveraging a foundation model as a frozen backbone for gradient-free feature extraction.
Following the feature extractor, we further develop an analytic classifier for gradient-free training.
To support both collective generalization and individual personalization, our FedHiP scheme incorporates three phases: analytic local training, analytic global aggregation, and analytic local personalization.
The closed-form solutions of our FedHiP scheme enable its ideal property of heterogeneity invariance, meaning that each personalized model remains identical regardless of how non-IID the data are distributed across all other clients.
Extensive experiments on benchmark datasets validate the superiority of our FedHiP scheme, outperforming the state-of-the-art baselines by at least {\color{black}{5.79\%-20.97\%}} in accuracy.
\end{abstract}


\vspace{0.55cm}

\section{Introduction}
Federated Learning (FL) is a popular distributed machine learning paradigm that enables the central server to orchestrate numerous clients’ collaborative training of a global consensus model, while preserving each client's data privacy~\cite{FL-1,FedAvg,FedProx}.
Yet, only one single globally shared model is often insufficient for adapting to each client's local applications, as the clients may have diverse personalized requirements~\cite{PFL-0,Ditto,FedALA}. 
For instance, in an image recognition scenario where one client primarily focuses on identifying animals while another one specializes in recognizing plants, a single global model would likely yield suboptimal results for each individual client.

Therefore, Personalized Federated Learning (PFL) has recently gained prominence as an extension to the FL paradigm, aiming to build personalized models for each individual client alongside the learning of a global model among clients~\cite{PFL-3,PFL-4,PFL-5}.
On the one hand, in PFL, each local client first benefits from the collective generalization by leveraging knowledge from all participating clients within the federation~\cite{PFL-4, PFL-5, PFL-3}. 
On the other hand, PFL also enables individual personalization by adapting models to local data characteristics~\cite{PFL-4, PFL-5, PFL-3}.


\newpage

\begin{figure}[t]
\centering
\includegraphics[width=0.9\columnwidth]{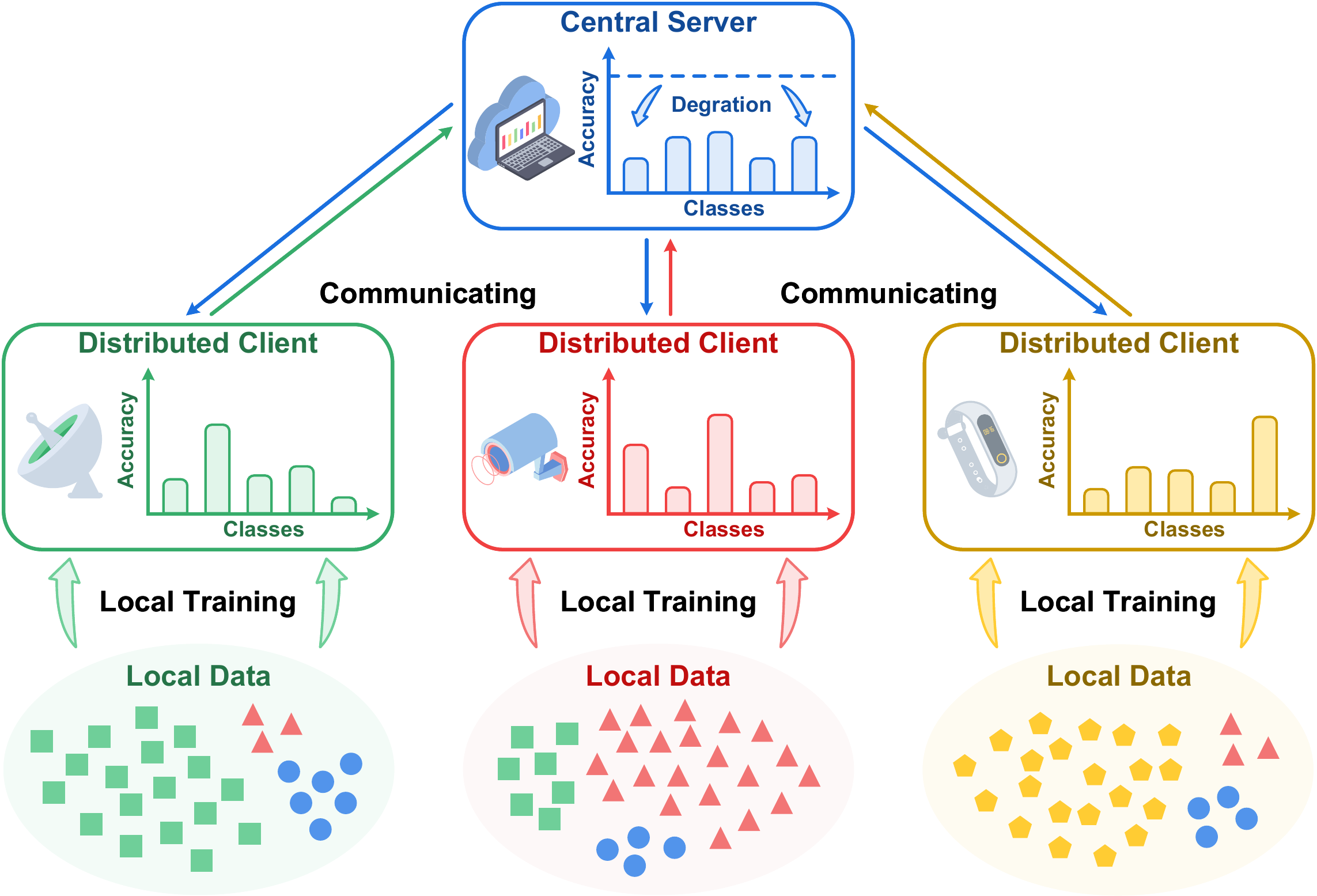}
\caption{Ubiquitous data heterogeneity in PFL. 
Each local model specializes in its unique distribution but lacks generalization ability.
Meanwhile, the global model struggles to learn global knowledge due to the clients' conflicting learning directions.
}
\label{Figure-1}
\vspace{-1.45em}
\end{figure}



A major hurdle of PFL is the ubiquitous challenge of data heterogeneity across clients, often referred to as non-IID data. 
This challenge severely impedes the convergence of federated models and consequently degrades their overall performance.
As illustrated in Fig.~\ref{Figure-1}, because of the clients' personalized preferences, their statistical distributions of data vary with distinct local optima and conflicting learning directions. 
Thus, each local model specializes in its unique local data distribution, causing the global model to deviate from a truly generalized representation.
As a result, the aggregated global model tends to drift away from the truly generalized representation.
This drift directly degrades collective generalization and, even more critically, compromises any subsequent personalization efforts that are based on the compromised global information.



To more thoroughly illustrate the impact of non-IID data on the global aggregation processes of PFL, here, we provide a vivid schematic diagram in Fig.~\ref{fig:Non-iid}.
Specifically, the local gradients of client-side training inherently tend to skew towards the local data distributions~\cite{PFL-0}.
As demonstrated in Fig.~\ref{fig:Non-iid}(a), in an ideal IID environment, each local large gradient general aligns, meaning their overall direction is relatively similar. 
Yet, in the non-IID environment, the local gradients are usually in significantly conflicting directions, as illustrated in Fig.~\ref{fig:Non-iid}(b).
The conflicting local gradients severely impact conventional global aggregation, which is crucial for collaborative training and robust generalization.
Consequently, the aggregated model is often far away from the truly generalized optimum.

\clearpage


\begin{figure}
    \centering
    \includegraphics[width=0.9\linewidth]{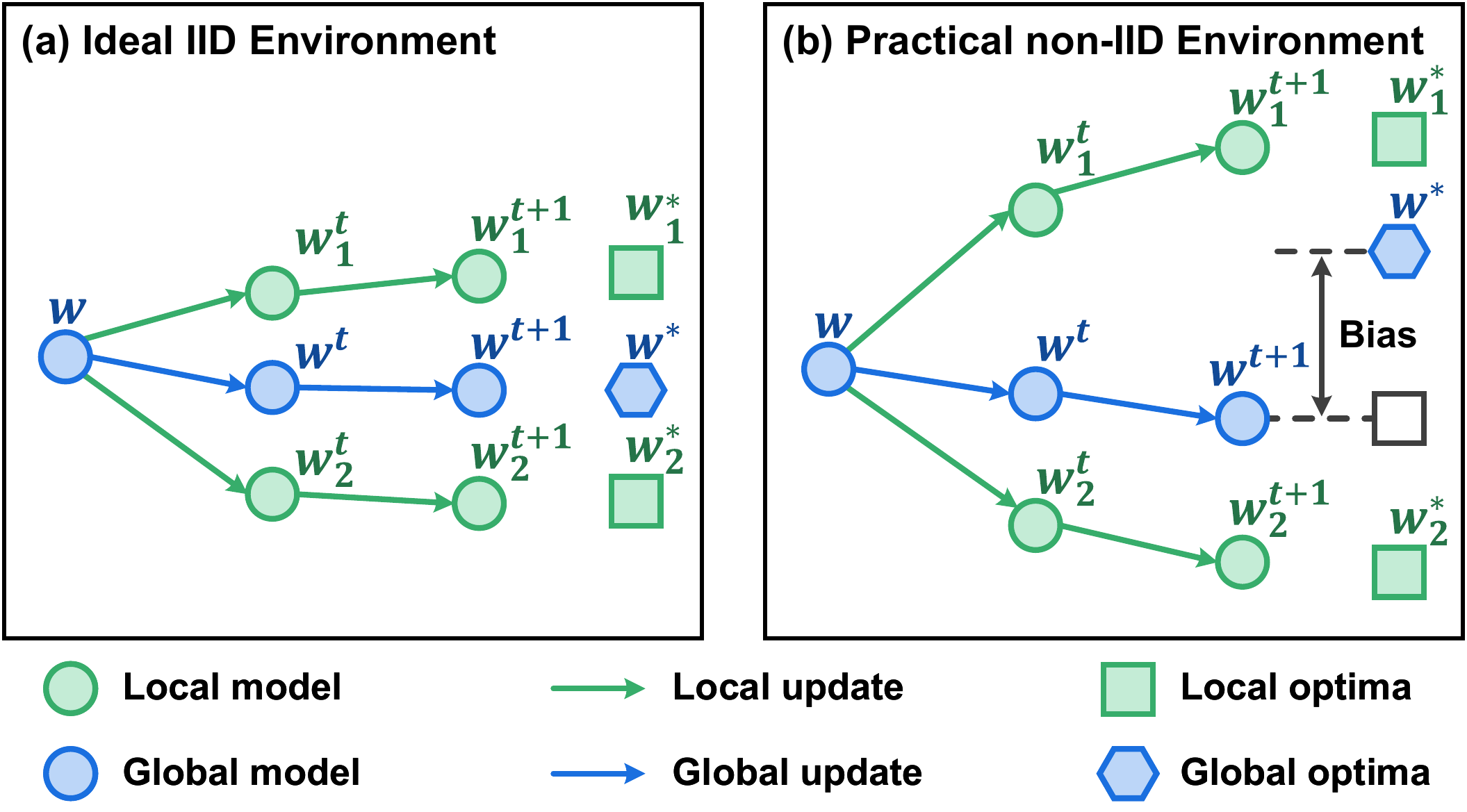}
    \caption{Gradient-based
updates' sensitivity to non-IID data.}
    \label{fig:Non-iid}
    \vspace{-0.45cm}
\end{figure}

This observation reinforces our core insight: \textit{\textbf{the root issue in PFL lies in the long-standing reliance on gradient-based updates, which are just inherently sensitive to non-IID data. }}
Thus, to fundamentally address this gradient-related issue of data heterogeneity, 
we believe that it is necessary to eliminate the primary culprit: gradient-based updates themselves.
Many existing studies for PFL have also recognized the inherent sensitivity of gradient-based updates to non-IID data.
However, they have focused on only mitigating the superficial symptoms rather than directly confronting the root cause~\cite{PFL-0,FedALA,PFL-3}.

To fundamentally address the aforementioned issue and bridge the research gap in PFL, in this paper, we propose a novel \underline{H}eterogeneity-\underline{i}nvariant \underline{P}ersonalized \underline{Fed}erated learning scheme, named FedHiP, though analytical (i.e., closed-form) solutions to avoid gradient-based updates.
Specifically, we exploit the widespread trend of self-supervised pre-training, leveraging a foundation model as a frozen backbone for gradient-free feature extraction.
Following the backbone, we further develop an analytic classifier for gradient-free training.
To balance between collective generalization and individual personalization within our FedHiP scheme, we devise a three-phase analytic framework in a fully gradient-free manner.
The key contributions are summarized as follows.
\begin{enumerate}
    \item Identifying the inherent sensitivity of the gradient-based updates to non-IID data, we propose our FedHiP scheme to fundamentally address this issue by avoiding gradient-based updates via analytical solutions in PFL. 
    \item To support both collective generalization and individual personalization, we devise a three-phase analytic framework for our FedHiP scheme in a gradient-free manner.
    \item We theoretically prove our FedHiP scheme's ideal property of heterogeneity invariance, i.e., each personalized model remains identical regardless of how non-IID the data are distributed across all other clients.
    \item Extensive experiments on various benchmark datasets validate the superiority of our proposed FedHiP scheme, outperforming the baselines by at least 5.79\%-20.97\%.
\end{enumerate}

The rest of this paper is organized as follows.
Section~\ref{Related Work} offers a review of the related work. 
Next, Section~\ref{system model} introduces the system model and problem statement. 
Then, Section~\ref{method} proposes the design of our FedHiP scheme. 
Section~\ref{evaluation}  presents experimental evaluations of our FedHiP scheme extensively.
Finally, Section~\ref{conclusion} gives the conclusion and discussion.

\newpage

\section{Related Work}
\label{Related Work}

\subsection{Personalized Federated Learning}
FL has emerged as a popular distributed machine learning paradigm, enabling multiple clients to collaboratively train a shared global model without exposing their raw data~\cite{FL-1,FL-3}.
In FL, the collective intelligence of the participating clients is leveraged to achieve a common goal, embodying the principles of crowdsensing/crowdsourcing~\cite{a1,a2,a3}.
Unfortunately, this ``one-size-fits-all" approach of traditional FL often falls short when the clients have different underlying data distributions, objectives, or unique local application requirements~\cite{PFL-0,Ditto,FedALA}.

In this context, PFL has recently gained significant attention as an extension of the traditional FL paradigm, aiming to collaboratively train models that are not only globally informed but also individually tailored to each client's specific needs~\cite{PFL-3}. 
Its core appeal lies in its dual capability: to simultaneously foster collective generalization by leveraging knowledge from all participating clients, and to enable individual personalization by adapting models to local data characteristics~\cite{PFL-4,PFL-5,PFL-3}.

Despite the advancements offered by PFL, ubiquitous data heterogeneity across clients still remains a critical challenge that severely hinders convergence and degrades performance.
This challenge is recognized to be attributable to the sensitivity of gradient-based updates to non-IID data, which impedes generalized knowledge in collective aggregation~\cite{PFL-0}.
Although many efforts are made to mitigate these effects, they focus on only mitigating the symptoms rather than confronting the root cause within the gradients~\cite{PFL-0,Ditto,FedALA}.
Differing from prior research, our proposed FedHiP scheme aims to fundamentally avoid gradient-based updates, enabling it to achieve the ideal and rare property of heterogeneity invariance.




\subsection{Analytic Learning}

Analytic learning stands out as a gradient-free technique to tackle common gradient-related challenges, such as vanishing and exploding gradients~\cite{AL_1, ACIL_2, ACIL_1}.
Its characteristic reliance on matrix inversion has also led to its recognition as pseudoinverse learning~\cite{AL_new_0, AL_new_1,AL_2}.
The radial basis network is a classic example of shallow analytic learning, employing the least squares estimation to train parameters subsequent to kernel transformation in its initial layer~\cite{AL_2}.
Beyond shallow architectures, analytic learning has also seen extensive application in multilayer networks by using the least squares methods to linearize portions of nonlinear network training~\cite{AL_3, AL_4, AL_CNN-2, AL_5}.

To ease the memory constraints of analytic learning, the block-wise recursive Moore-Penrose inverse is proposed for efficient joint learning in a manageable recursive manner~\cite{AL_6}.
With this breakthrough, the analytic learning technique has demonstrated exceptional efficacy across diverse applications, e.g., continual learning~\cite{CALM,AFL,AL_RL}.
However, there is still a significant gap in introducing analytic learning into PFL, with the challenge of balancing between collective generalization and individual personalization.
Our FedHiP scheme aims to bridge the research gap, thereby fully leveraging the advantages of gradient-free analytic learning in PFL.
To our best knowledge, we are the first to introduce analytic learning into PFL. 

\clearpage

\section{System Model and Problem Statement}
\label{system model}
In this paper, we consider a typical PFL system operating within a heterogeneous environment, comprising a central server and $K$ distributed clients $\{1, 2, \cdots, K\}$. 
Without loss of generality, we use the most common task of image recognition as a representative example to model the problem, noting that other PFL tasks can be similarly formulated.
For each client $k \in \{1, 2, \cdots, K\}$, we employ $\mathbf{\mathcal{D}}_k \sim \{\mathbf{X}_k, \mathbf{Y}_k\}$ to denote its local training dataset and use $\mathbf{\mathcal{D}}_k^\mathrm{test}$ to denote its local testing dataset. 
The training dataset $\mathbf{\mathcal{D}}_k$ of client $k$ consists of $N_k$ image samples $\mathbf{X}_k\in \mathbb{R}^{N_k \times l\times w\times h}$ and the corresponding labels $\mathbf{Y}_k \in \mathbb{R}^{N_k \times d}$.
Here, $l\times w\times h$ denotes three dimensions of each input image.
Moreover, $\mathbf{Y}_k$ is a one-hot label tensor, and $d$ denotes the total number of classes.
Given the inherent heterogeneity in data distribution among clients, the datasets $\mathcal{D}_1, \mathcal{D}_2, \cdots, \mathcal{D}_K$ are non-IID.
Each client also exhibits distinct preferences in real-world applications, as reflected by their Non-IID test datasets $\mathcal{D}_1^\mathrm{test}, \mathcal{D}_2^\mathrm{test}, \cdots, \mathcal{D}_K^\mathrm{test}$.
Our objective is thus to construct a specific personalized model for each client, maximizing its performance by achieving as high accuracy as possible on its corresponding local application $\mathcal{D}_k^\mathrm{test}$.
This goal requires integrating both the local knowledge (i.e., personalization) from each client $k$'s dataset $\mathcal{D}_k$ and the global knowledge (i.e., generalization) from all clients' datasets $\mathcal{D}_{1:K}$.


\section{The Proposed FedHiP Scheme}
\label{method}

\subsection{Motivation and Overview}
As illustrated in Fig.~\ref{fig:Non-iid}, we demonstrate the effect of client drift caused by Non-IID data by comparing it with an IID data setting.
In the ideal IID scenario, all gradient updates during local training are expected to remain consistent and unified, as shown in Fig.~\ref{fig:Non-iid}(a).
However, in the ubiquitous non-IID setting, each local client updates gradients in significantly conflicting directions, causing the aggregation to substantially deviate from the optimum, as depicted in Fig.~\ref{fig:Non-iid}(b).
Thus, we identify that the root issue is the sensitivity of gradient-based updates to non-IID data.
Motivated by this insight, our FedHiP scheme is designed to fundamentally address the non-IID issue by avoiding reliance on gradient-based updates.

To fulfill the gradient-free goal, we exploit the widespread trend and universal recognition of self-supervised pre-training by employing a foundation model as a frozen backbone.
This frozen backbone facilitates gradient-free feature extraction and benefits from the foundation model's powerful representation capacity.
For instance, in the context of image recognition, we can utilize a Vision Transformer with Masked Auto-Encoders (ViT-MAE) as the foundation model~\cite{ViT-MAE}.
The central server can readily build a foundation model using publicly available datasets, or by directly downloading an open-source model.
Importantly, the self-supervised pre-training process typically involves only image reconstruction and does not require any labeled data for supervision.
Thus, leveraging such a foundation model in PFL doesn't introduce overly strong assumptions, nor does it pose the same privacy risks as typical supervised pre-training models.
Especially, many works have adopted similar approaches and validated their efficacy for FL~\cite{fl-pre-train-1,fl-pre-train-2,fl-pre-train-3,FL-foundation-1,fl-foundation-2}.


Following the frozen feature extractor, we further develop an analytic classifier for gradient-free training. 
Let's denote the extracted feature from the input $\mathbf{X}_k$ as $\mathbf{F}_k$. Our ultimate goal is to build a personalized analytic model $\hat{\mathbf{P}}_k$ for each client $k$ in a gradient-free manner.
To support both collective generalization and individual personalization for constructing $\hat{\mathbf{P}}_k$, we define the following objective for integrated optimization:
\begin{equation}
\label{eq:person-1}
  \underset{\mathbf{{P}}_k}{\min} \;
  {\color{myred} \underbrace{\|\mathbf{Y}_{1:K}-\mathbf{F}_{1:K}\mathbf{P}_k\|^2}_{(a)}}
  + {\color{myblue} \underbrace{\alpha  \|\mathbf{Y}_k-\mathbf{F}_k\mathbf{P}_k\|^2}_{(b)}}
  + {\color{mygreen}\underbrace{\mathbf{\beta} \| \mathbf{P}_k \|^{2}}_{(c)}},
\end{equation}
where $\mathbf{F}_{1:K}$ is the stacked features, and $\mathbf{Y}_{1:K}$ is the stacked labels, assembled from the full datasets $\mathcal{D}_{1:K}$ of all $K$ clients.
Specifically, the first term {\color{myred}$\|\mathbf{Y}_{1:K}-\mathbf{F}_{1:K}\mathbf{P}_k\|^2$} aims to extract the global knowledge (i.e., generalization) from all clients’ full datasets $\mathcal{D}_{1:K}$. 
Meanwhile, the second term {\color{myblue}$\alpha  \|\mathbf{Y}_k-\mathbf{F}_k\mathbf{P}_k\|^2$} is designed to enhance the local knowledge (i.e., personalization) from the target client $k$'s dataset $\mathcal{D}_{1:k}$. 
The hyperparameter {\color{myblue}$\alpha$} serves to control the trade-off between generalization in {\color{myred}$(a)$} and personalization in {\color{myblue}$(b)$}. 
In addition, the third term {\color{mygreen}$\mathbf{\beta} \| \mathbf{P}_k \|^{2}$} is included for regularization to prevent overfitting, with an adjustable hyperparameters {\color{mygreen}$\beta$} for flexibility.


To achieve the objective in \eqref{eq:person-1}, we devise a three-phase analytic framework for our FedHiP scheme, including local training in Section \ref{phase 1}, global aggregation in Section \ref{phase 2}, and local personalization in Section \ref{phase 3}.
The detailed framework of our FedHiP scheme is illustrated in Fig.~\ref{fig_2}.

First of all, each client $k$ is required to train a local model $\hat{\mathbf{L}}_k$ using its local dataset.
This analytic local training phase follows the regularized empirical risk minimization objective in \eqref{eq:local-1}.
This way, each client $k$ can integrate its local knowledge into model $\hat{\mathbf{L}}_k$, thereby providing a foundation for subsequent global knowledge exchange during the second phase.
\begin{equation}
\label{eq:local-1}
  \hat{\mathbf{L}}_k=\arg\underset{\mathbf{{L}}_k}{\min} \ 
  {\color{black}\|\mathbf{Y}_k-\mathbf{F}_k\mathbf{L}_k\|^2}
  +
  {\color{black}\mathbf{\beta} \| \mathbf{L}_k \|^{2}}.
\vspace{-0.5mm}
\end{equation}

Next, each client uploads its local model $\hat{\mathbf{L}}_k$ to the server, which then recursively aggregates the global knowledge.
After aggregating all $K$ clients' knowledge, the server can obtain the final global model $\hat{\mathbf{G}}_{K}$ with the optimization objective in~\eqref{eq:global-1}.
This global aggregation phase primarily aims to minimize the regularized empirical risk across the entire training dataset, under privacy-preserving conditions where the central server is prevented from directly accessing each local dataset.
\begin{equation}
\label{eq:global-1}
  \hat{\mathbf{G}}_{K} =
  \arg\underset{\mathbf{{G}}_K}{\min} \ 
  {\color{black}\|\mathbf{Y}_{1:K}-\mathbf{F}_{1:K}\mathbf{G}_K\|^2}
  +
  {\color{black}\mathbf{\beta} \| \mathbf{G}_K \|^{2}}.
\vspace{-0.5mm}
\end{equation}

Then, the central server distributes the global information back to each client. 
Finally, each client $k$ performs additional local personalization to get its personalized model $\hat{\mathbf{P}}_k$. 
Thus, each obtained model $\hat{\mathbf{P}}_k$ fulfills the goal in~\eqref{eq:person-1} 
by balancing collective generalization and individual personalization.

The entire three phases are fully gradient-free via analytical (i.e., closed-form) solutions, enabling our FedHiP scheme's property of heterogeneity invariance.
This property means that each personalized model remains identical regardless of how non-IID the data are distributed across all other clients.
We provide detailed theoretical analyses in Section~\ref{theoretical}.


\begin{figure*}
\centering
\includegraphics[width=2.0\columnwidth]{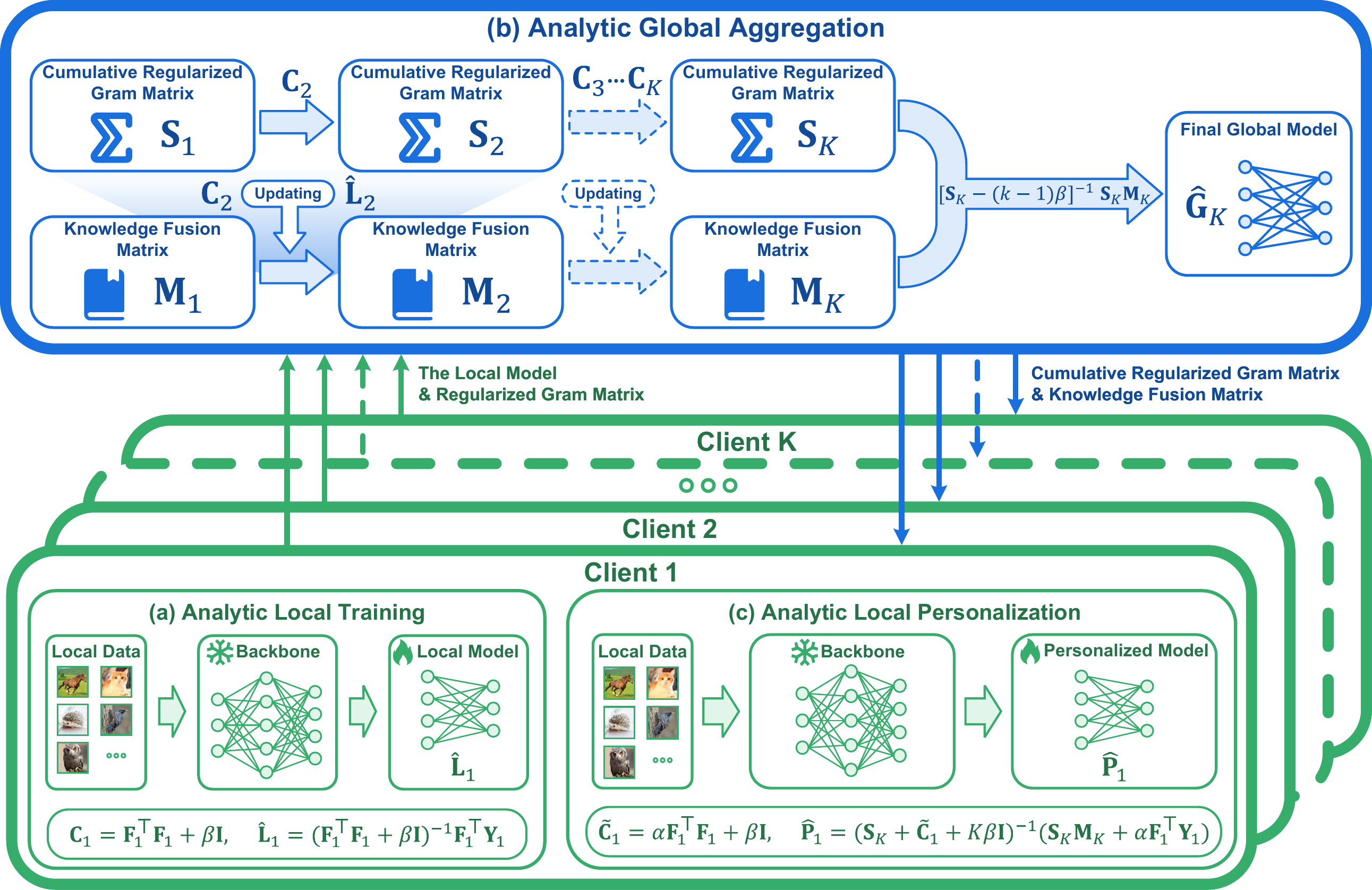}
\vspace{0.15cm}
\caption{The detailed design of our proposed FedHiP scheme.}
\label{fig_2}
\vspace{0.01cm}
\end{figure*}


%


\subsection{Phase 1: Analytic Local Training}
\label{phase 1}






As a preliminary step, each participating client first registers with the central server and downloads a foundation model as the feature extractor.
Taking image recognition as an example, we can employ ViT-MAE~\cite{ViT-MAE} as the foundation model, where the clients only need to download the encoder component for local feature extraction.
Specifically, for the $k$-th client, upon receiving the foundation model backbone, $\mathrm{Backbone}(\cdot, \boldsymbol{\Theta})$, it performs forward propagation on its local raw data $\mathbf{X}_{k}$ through the backbone network to extract the feature matrix $\mathbf{F}_{k}$:
\begin{equation}
\label{eq:backbone-1}
\mathbf{F}_{k} = \mathrm{Backbone}\left( {\mathbf{X}_{k};\boldsymbol{\Theta}} \right),
\end{equation}
where $\mathbf{F}_{k} \in \mathbb{R}^{N_k \times m}$ represents the feature matrix of the $k$-th client, with $N_k$ and $m$ denoting the sample size and embedding dimension for the $k$-th client, respectively.

Subsequently, each distributed client $k$ conducts the first phase by analytic local training based on the extracted feature representations $\mathbf{F}_{k}$, as illustrated in Fig.~\ref{fig_2}. 
For each client $k$, the first phase's objective is to learn a local model $\hat{\mathbf{L}}_k \in \mathbb{R}^{m \times d}$ that satisfies the optimization objective formulated in~\eqref{eq:local-1}.
Here, in our FedHiP scheme, we leverage the least squares method to obtain the closed-form solution of the local model $\hat{\mathbf{L}}_k$ for each client $k$, in a gradient-free manner.
Specifically the closed-form expression for the optimal solution of~\eqref{eq:local-1} is given by~\eqref{eq:local_solution}.


\begin{equation}
\hat{\mathbf{L}}_k = \left( \mathbf{F}_k^\top \mathbf{F}_k + \beta \mathbf{I} \right)^{-1} \mathbf{F}_k^\top \mathbf{Y}_k.
\label{eq:local_solution}
\vspace{-1.1mm}
\end{equation}

Here, we define the \textit{Regularized Gram Matrix} $\mathbf{C}_k \in \mathbb{R}^{m \times m}$ for the $k$-th client, and it can be computed as follows:
\begin{equation}
\mathbf{C}_k = \mathbf{F}_k^{\top}\mathbf{F}_k + \beta \mathbf{I}.
\label{eq:covariance_matrix}
\vspace{-1.1mm}
\end{equation}

Consequently, the solution of the local model $\hat{\mathbf{L}}_k$ in \eqref{eq:local_solution} can be further expressed in the compact form with $\mathbf{C}_k$, as follows:
\begin{equation}
\hat{\mathbf{L}}_k = \mathbf{C}_k^{-1} \mathbf{F}_k^\top \mathbf{Y}_k.
\label{eq:local_solution-2}
\vspace{-1.1mm}
\end{equation}

After completing the aforementioned analytic computations, the $k$-th client transmits the obtained local model $\hat{\mathbf{L}}_{k}$ and the \textit{Regularized Gram Matrix} $\mathbf{C}_k$ to the central server.
Notably, these transmitted matrices convey compressed local knowledge without compromising each client's raw data privacy, as they cannot be reverse-engineered to reconstruct $\mathbf{F}_k$ and $\mathbf{Y}_k$.
Given $\hat{\mathbf{L}}_{k}$ and $\mathbf{C}_k$, there exist infinitely many \textit{indistinguishable} pairs of $\mathbf{F}'_k=\mathbf{U}\mathbf{F}_k$ and $\mathbf{Y}'_k=\mathbf{U}\mathbf{Y}_k$, where $\mathbf{U}$ can represent any \textit{semi-orthogonal matrix} such that $\mathbf{U}^\top \mathbf{U}=\mathbf{I}$, as shown in~\eqref{eq:local_solution-privacy}.
\begin{equation}
\left\{
\begin{aligned}
&\mathbf{C}'_k = (\mathbf{U}
\mathbf{F}_k)^{\top}(\mathbf{U}\mathbf{F}_k) + \beta \mathbf{I}
=\mathbf{F}_k^{\top}\mathbf{F}_k + \beta \mathbf{I}
=\mathbf{C}_k.
\\
&\hat{\mathbf{L}'}_k = (\mathbf{C'}_k)^{-1} (\mathbf{U}\mathbf{F}_k)^\top (\mathbf{U}\mathbf{Y}_k)
=\mathbf{C}_k^{-1} \mathbf{F}_k^\top \mathbf{Y}_k
=\hat{\mathbf{L}}_k.
\end{aligned}
\right.
\label{eq:local_solution-privacy}
\vspace{-1.0mm}
\end{equation}

Therefore, the feature matrix $\mathbf{F}_k$ and label matrix $\mathbf{Y}_k$ cannot be recovered because of the infinite possible \textit{semi-orthogonal transformations}, ensuring the privacy of each client's raw data.

\clearpage

\subsection{Phase 2: Analytic Global Aggregation}
\label{phase 2}

%






Upon completion of analytic local training by all clients, the server then performs analytic global aggregation using the local knowledge received from each client $k$ (i.e., $\mathbf{C}_k$ and $\hat{\mathbf{L}}_k$).
In this phase, for each client $k$, the server recursively computes and retains the \textit{Cumulative Regularized Gram Matrix} $\mathbf{S}_k$ and the \textit{Knowledge Fusion Matrix} $\mathbf{M}_k$, thereby ultimately deriving the global model $\hat{\mathbf{G}}_k$ that satisfies the objective in~\eqref{eq:global-1}.

First of all, the server recursively computes the \textit{Cumulative Regularized Gram Matrix} $\mathbf{S}_k \in \mathbb{R}^{m \times m}$, as follows:
\begin{equation} 
\mathbf{S}_0=0, \quad \mathbf{S}_k = \mathbf{S}_{k-1} + \mathbf{C}_k = \sum_{i=1}^{k} \mathbf{C}_i.
\label{eq:recursive-C}
\vspace{-0.5mm}
\end{equation}

Then, the server  calculates the \textit{Knowledge Fusion Matrix} $\mathbf{M}_k \in \mathbb{R}^{m \times d}$ to update the global knowledge for incorporating the $k$-th client's local knowledge $\hat{\mathbf{L}}_k$, as follows:
\begin{equation} 
\mathbf{M}_k = \boldsymbol{\mu}_k \mathbf{M}_{k-1} + \boldsymbol{\nu}_k \hat{\mathbf{L}}_k,
\label{eq:recursive-weights}
\vspace{-0.5mm}
\end{equation}
where the weighting matrices $\boldsymbol{\mu}_k$ and $\boldsymbol{\nu}_k$ are defined in~\eqref{eq:coefficient}.
\begin{equation}
\begin{aligned}
\boldsymbol{\mu}_k = \mathbf{S}^{-1}_{k} \mathbf{S}_{k-1},
\quad
\boldsymbol{\nu}_k = \mathbf{S}^{-1}_{k} \mathbf{C}_{k}.
\end{aligned}
\label{eq:coefficient}
\vspace{-0.5mm}
\end{equation}

The recursive form~\eqref{eq:recursive-weights} of our FedHiP scheme characterizes the knowledge update dynamics in an explicit and interpretable manner. 
Specifically, the first term $\boldsymbol{\mu}_k \mathbf{M}_{k-1}$ corresponds to the weighted retention of previously accumulated knowledge, while the second term $\boldsymbol{\nu}_k \hat{\mathbf{L}}_k$ represents the weighted incorporation of new knowledge from the $k$-th client.
Moreover, the weighting matrices $\boldsymbol{\mu}_k$ and $\boldsymbol{\nu}_k$ quantify the importance of historically retained and newly acquired knowledge, respectively.

In our FedHiP, the \textit{Knowledge Fusion Matrix} $\mathbf{M}_k$ serves as a dedicated component on the central server for aggregating and memorizing local knowledge from clients.
In particular, the \textit{Knowledge Fusion Matrix} $\mathbf{M}_k$ is not the final global model itself. 
Instead, the server needs to invoke this knowledge to derive the global model $\hat{\mathbf{G}}_k \in \mathbb{R}^{m \times m}$ as follows:
%
\begin{equation} 
\hat{\mathbf{G}}_k =  \left[\mathbf{S}_k - (k - 1) \beta \mathbf{I}\right]^{-1} \mathbf{S}_k \mathbf{M}_k.
\label{eq:correct}
\vspace{-0.5mm}
\end{equation}

As illustrated in Fig.~\ref{fig_2}, the server iteratively executes this computation until it integrates the local model knowledge from all $K$ clients and obtains the final global model $\hat{\mathbf{G}}_K$ along with the corresponding \textit{Knowledge Fusion Matrix} $\mathbf{M}_K$.
Moreover, in \textbf{Theorem 1} from Section \ref{theoretical}, we show that the final global model $\hat{\mathbf{G}}_K$ within our FedHiP scheme is exactly equivalent to performing regularized empirical risk minimization~\eqref{eq:global-1} on the complete dataset $\mathcal{D}_{1:K}$ aggregated from all $K$ clients.

\subsection{Phase 3: Analytic Local Personalization}
\label{phase 3}








After finishing the second phase of our FedHiP scheme, the final global model effectively assimilates knowledge from diverse client training data, thereby exhibiting superior generalization capabilities of global knowledge. 
However, as we have analyzed earlier, only a single shared model is insufficient for PFL. 
Therefore, for each client $k$, we further design the third phase to construct the personalized model $\hat{\mathbf{P}}_k$ to fulfill the integrated optimization objective in~\eqref{eq:person-1}, enabling individual personalization by adapting to the local distribution.



Specifically, in this phase, the central server first distributes the \textit{Cumulative Regularized Gram Matrix} $\mathbf{S}_K$ and \textit{Knowledge Fusion Matrix} $\mathbf{M}_K$ to all clients.
Then, each client $k$ computes the local \textit{Weighted Regularized Gram Matrix} $\widetilde{\mathbf{C}}_k$ as follows:
\begin{equation}
\widetilde{\mathbf{C}}_k = \alpha\mathbf{F}_k^{\top}\mathbf{F}_k + \beta \mathbf{I}.
\label{eq:covariance_matrix-p}
\end{equation}

Thus, based on $\mathbf{S}_K$ and $\mathbf{M}_K$ received from the server, along with the local matrices $\mathbf{F}_k$, $\mathbf{Y}_k$, and $\widetilde{\mathbf{C}}_k$, each client $k$ can get its personalized model $\hat{\mathbf{P}}_k$ through an analytic solution of~\eqref{eq:final_correction-p}.
\begin{equation}
\hat{\mathbf{P}}_k = 
({\mathbf{S}}_K +\widetilde{\mathbf{C}}_k - K \beta \mathbf{I})^{-1}
(\mathbf{S}_K \mathbf{M}_K + \alpha \mathbf{F}_k^{\top}\mathbf{Y}_k).
\label{eq:final_correction-p}
\end{equation}

The obtained $\hat{\mathbf{P}}_k$ is equivalent to the optimal solution of the empirical risk minimization objective within~\eqref{eq:person-1}, as established in \textbf{Theorem 2} from Section \ref{theoretical}.
The closed-form solutions of our FedHiP scheme enable its ideal property of heterogeneity invariance, as shown in \textbf{Theorem 3} from Section \ref{theoretical}.
This ideal property means that each personalized model (i.e., $\hat{\mathbf{P}}_k$ for the $k$-th client) remains identical regardless of how non-IID the data are distributed across all other clients.\
For clarity, we provide the pseudocode of our FedHiP scheme in Algorithm~\ref{alg:fedhip}.


\begin{algorithm}[t]
\caption{Our proposed FedHiP scheme}
\label{alg:fedhip}
\begin{algorithmic}[1]
\STATE \textbf{Input:} The foundation model $\mathrm{Backbone}(\cdot, \boldsymbol{\Theta})$, the local training datasets $\{\mathcal{D}_k = (\mathbf{X}_k, \mathbf{Y}_k)\}_{k=1}^K$.
\STATE \textbf{Output:} 
The personalized models $\{\hat{\mathbf{P}}_k\}_{k=1}^K$.


\STATE \textbf{// Phase 1: Analytic Local Training (Client)}
\FOR{each client $k = 1, 2, \cdots, K$ \textbf{in parallel}}
    \STATE Extract local features $\mathbf{F}_k$ through backbone via \eqref{eq:backbone-1};
    \STATE Compute the \textit{Regularized Gram Matrix} $\mathbf{C}_k$ via (\ref{eq:covariance_matrix});
    \STATE Obtain the local model $\hat{\mathbf{L}}_k$ via \eqref{eq:local_solution-2};
    \STATE Send $\{\mathbf{C}_k, \hat{\mathbf{L}}_k\}$ to \textit{Server};
\ENDFOR

\STATE \textbf{// Phase 2: Analytic Global Aggregation (Server)}
\STATE Initialize $\mathbf{S}_0 = \mathbf{0}$, $\mathbf{M}_0 = \mathbf{0}$;
\FOR{each client $k = 1, 2, \cdots, K$}
    \STATE Receive the local knowledge $\{\mathbf{C}_k, \hat{\mathbf{L}}_k\}$ from \textit{Client $k$};
    \STATE Get \textit{Cumulative Regularized Gram Matrix} $\mathbf{S}_k$ via (\ref{eq:recursive-C});
    \STATE Calculate weighting matrices $\boldsymbol{\mu}_k$ and $\boldsymbol{\nu}_k$ via (\ref{eq:coefficient});
    \STATE Update \textit{Knowledge Fusion Matrix} $\mathbf{M}_k$ via (\ref{eq:recursive-weights});
    \STATE Derive global model $\hat{\mathbf{G}}_k$ based on $\mathbf{S}_k$ and $\mathbf{M}_k$ via (\ref{eq:correct});
\ENDFOR

\STATE \textbf{// Phase 3: Analytic Local Personalization (Client)}
\FOR{each client $k = 1, 2, \cdots, K$ \textbf{in parallel}}
    \STATE Receive the global matrices $\{\mathbf{S}_K, \mathbf{M}_K\}$ from \textit{Server};
    \STATE Get \textit{Weighted Regularized Gram Matrix} $\widetilde{\mathbf{C}}_k$ via (\ref{eq:covariance_matrix-p});
    \STATE Derive the personalized model $\hat{\mathbf{P}}_k$ via (\ref{eq:final_correction-p});
\ENDFOR

\STATE \textbf{Return:} The personalized models $\{\hat{\mathbf{P}}_k\}_{k=1}^K$.
\end{algorithmic}
\end{algorithm}



\subsection{Theoretical Analyses}
\label{theoretical}

\begin{lemma}
\label{lemma1}
    The \textit{Knowledge Fusion Matrix} $\mathbf{S}_K$, derived by the recursive formulation~\eqref{eq:recursive-weights} in our FedHiP scheme, is mathematically equivalent to the following closed-form solution:
\begin{equation}
\begin{aligned} 
\mathbf{M}_K = \left( \mathbf{F}_{1:K}^\top \mathbf{F}_{1:K} + K\beta \mathbf{I} \right)^{-1} \mathbf{F}_{1:K}^\top \mathbf{Y}_{1:K}.
\end{aligned} 
\end{equation}
\end{lemma}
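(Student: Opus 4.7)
The plan is to prove the claim by induction on $k$, but the cleaner route is to substitute the definitions of $\boldsymbol{\mu}_k$, $\boldsymbol{\nu}_k$, and $\hat{\mathbf{L}}_k$ directly into the recursion and then telescope. First I would unfold $\boldsymbol{\nu}_k \hat{\mathbf{L}}_k = \mathbf{S}_k^{-1} \mathbf{C}_k \cdot \mathbf{C}_k^{-1} \mathbf{F}_k^\top \mathbf{Y}_k = \mathbf{S}_k^{-1} \mathbf{F}_k^\top \mathbf{Y}_k$, so that the cross-term $\mathbf{C}_k \mathbf{C}_k^{-1}$ cancels and the data contribution of client $k$ appears in its raw form $\mathbf{F}_k^\top \mathbf{Y}_k$. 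Substituting this into \eqref{eq:recursive-weights} gives
\begin{equation*}
\mathbf{M}_k = \mathbf{S}_k^{-1} \mathbf{S}_{k-1} \mathbf{M}_{k-1} + \mathbf{S}_k^{-1} \mathbf{F}_k^\top \mathbf{Y}_k.
\end{equation*}

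The key step is then to left-multiply by $\mathbf{S}_k$ and introduce the auxiliary quantity $\mathbf{T}_k \triangleq \mathbf{S}_k \mathbf{M}_k$. This converts the recursion into the purely additive form $\mathbf{T}_k = \mathbf{T}_{k-1} + \mathbf{F}_k^\top \mathbf{Y}_k$ with $\mathbf{T}_0 = \mathbf{0}$, which immediately telescopes to $\mathbf{T}_K = \sum_{i=1}^{K} \mathbf{F}_i^\top \mathbf{Y}_i = \mathbf{F}_{1:K}^\top \mathbf{Y}_{1:K}$ (the last equality is just the block definition of the stacked matrices).

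In parallel, I would use \eqref{eq:recursive-C} together with \eqref{eq:covariance_matrix} to expand $\mathbf{S}_K = \sum_{i=1}^{K}(\mathbf{F}_i^\top \mathbf{F}_i + \beta \mathbf{I}) = \mathbf{F}_{1:K}^\top \mathbf{F}_{1:K} + K\beta \mathbf{I}$, again invoking the block structure of $\mathbf{F}_{1:K}$. Combining the two identities yields $\mathbf{M}_K = \mathbf{S}_K^{-1} \mathbf{T}_K$ in exactly the claimed closed form. The only nontrivial obstacle is checking invertibility of each $\mathbf{S}_k$; since every $\mathbf{C}_i = \mathbf{F}_i^\top \mathbf{F}_i + \beta \mathbf{I}$ is positive definite for $\beta > 0$, the cumulative sum $\mathbf{S}_k$ inherits positive definiteness, so all inverses in the recursion and in the closed-form expression are well-defined. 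This justifies both the algebraic manipulations above and the use of $\mathbf{S}_k^{-1}$ throughout the proof.
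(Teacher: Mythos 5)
Your proposal is correct and rests on the same two cancellations as the paper's proof --- namely $\boldsymbol{\nu}_k\hat{\mathbf{L}}_k=\mathbf{S}_k^{-1}\mathbf{C}_k\cdot\mathbf{C}_k^{-1}\mathbf{F}_k^\top\mathbf{Y}_k=\mathbf{S}_k^{-1}\mathbf{F}_k^\top\mathbf{Y}_k$ and the telescoping of the $\mathbf{S}_k$'s --- but it packages them differently. The paper fully unrolls the recursion \eqref{eq:recursive-weights} into the sum $\mathbf{M}_K=\sum_{j=1}^K\bigl(\prod_{i=j+1}^K\boldsymbol{\mu}_i\bigr)\boldsymbol{\nu}_j\hat{\mathbf{L}}_j$ and then collapses the product $\prod_{i=j+1}^K\boldsymbol{\mu}_i=\mathbf{S}_K^{-1}\mathbf{S}_j$; you instead change variables to $\mathbf{T}_k=\mathbf{S}_k\mathbf{M}_k$, which turns \eqref{eq:recursive-weights} into the purely additive recursion $\mathbf{T}_k=\mathbf{T}_{k-1}+\mathbf{F}_k^\top\mathbf{Y}_k$ and makes the telescoping one line. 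Your route is arguably cleaner: it avoids manipulating products of non-commuting matrices, handles the base case transparently (since $\mathbf{S}_0=\mathbf{M}_0=\mathbf{0}$ gives $\mathbf{T}_0=\mathbf{0}$), and --- unlike the paper --- explicitly justifies invertibility of every $\mathbf{S}_k$ via positive definiteness of the $\mathbf{C}_i$ for $\beta>0$, a hypothesis the paper uses silently. The paper's unrolled form has the minor expository advantage of exhibiting $\mathbf{M}_K$ directly as a weighted combination $\mathbf{S}_K^{-1}\sum_j\mathbf{C}_j\hat{\mathbf{L}}_j$ of the local models, which supports the ``knowledge fusion'' interpretation, but mathematically the two arguments are equivalent and both reach the stated closed form.
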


\begin{proof}
We proceed by systematically expanding the recursive relation~\eqref{eq:recursive-weights}. Beginning with the recurrence relation, we have:
\begin{align}
\mathbf{M}_K &= \boldsymbol{\mu}_K \mathbf{M}_{K-1} + \boldsymbol{\nu}_K \hat{\mathbf{L}}_K \\
&= \boldsymbol{\mu}_K (\boldsymbol{\mu}_{K-1} \mathbf{M}_{K-2} + \boldsymbol{\nu}_{K-1} \hat{\mathbf{L}}_{K-1}) + \boldsymbol{\nu}_K \hat{\mathbf{L}}_K \\
&= \boldsymbol{\mu}_K \boldsymbol{\mu}_{K-1} \mathbf{M}_{K-2} + \boldsymbol{\mu}_K \boldsymbol{\nu}_{K-1} \hat{\mathbf{L}}_{K-1} + \boldsymbol{\nu}_K \hat{\mathbf{L}}_K.
\end{align}

By iterating this expansion to its conclusion, we can obtain the general form of $\mathbf{M}_K$, as follows:
\begin{equation}
\mathbf{M}_K = \sum_{j=1}^K \left(\prod_{i=j+1}^K \boldsymbol{\mu}_i\right) \boldsymbol{\nu}_j \hat{\mathbf{L}}_j.
\end{equation}

Due to the telescoping property of $\boldsymbol{\mu}_k$ within \eqref{eq:coefficient}, we can derive its product telescopes as follows:
\begin{equation}
\prod_{i=j+1}^K \boldsymbol{\mu}_i = \prod_{i=j+1}^K \mathbf{S}_i^{-1} \mathbf{S}_{i-1} = \mathbf{S}_K^{-1} \mathbf{S}_j.
\end{equation}

Substituting this telescoping result and using the definition of $\boldsymbol{\nu}_j$ in \eqref{eq:coefficient}, we can derive:
\begin{equation}
\mathbf{M}_K = 
\sum_{j=1}^K \mathbf{S}_K^{-1} \mathbf{S}_j \boldsymbol{\nu}_j \hat{\mathbf{L}}_j
=
\mathbf{S}_K^{-1} \sum_{j=1}^K \mathbf{C}_j \hat{\mathbf{L}}_j.
\label{proof-MK-1}
\end{equation}

Using the equation~\eqref{eq:local_solution-2} for $\hat{\mathbf{L}}_j$, we can establish that $\mathbf{C}_j \hat{\mathbf{L}}_j = \mathbf{F}_j^\top \mathbf{Y}_j$. 
Substituting this result into~\eqref{proof-MK-1}, we can thus derive:
\begin{equation}
\mathbf{M}_K = \mathbf{S}_K^{-1} \mathbf{F}_{1:K}^\top \mathbf{Y}_{1:K}.
\end{equation}

Using the definition of $\mathbf{S}_K$ in~\eqref{eq:recursive-C}, we have $\mathbf{S}_k = \mathbf{F}_{1:k}^\top \mathbf{F}_{1:k} + k\beta \mathbf{I}$. Therefore, the closed-form expression of $\mathbf{M}_K$ becomes:
\begin{equation}
\mathbf{M}_K = \left( \mathbf{F}_{1:K}^\top \mathbf{F}_{1:K} + K\beta \mathbf{I} \right)^{-1} \mathbf{F}_{1:K}^\top \mathbf{Y}_{1:K}.
\end{equation}
\end{proof}

\begin{theorem}\label{thm:1}
The final global model $\hat{\mathbf{G}}_K$ derived by~\eqref{eq:correct} in our FedHiP scheme is mathematically equivalent to the optimal solution of the empirical risk minimization objective within~\eqref{eq:global-1}.

\end{theorem}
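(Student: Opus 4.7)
The plan is to compute the optimal solution of the regularized least-squares problem in~\eqref{eq:global-1} in closed form, and then to algebraically manipulate the expression for $\hat{\mathbf{G}}_K$ in~\eqref{eq:correct} so that it matches this closed form. Since the objective in~\eqref{eq:global-1} is a strongly convex quadratic in $\mathbf{G}_K$, setting the gradient to zero yields the unique minimizer
\begin{equation*}
\hat{\mathbf{G}}_K^{\star} = \left(\mathbf{F}_{1:K}^\top \mathbf{F}_{1:K} + \beta \mathbf{I}\right)^{-1} \mathbf{F}_{1:K}^\top \mathbf{Y}_{1:K}.
\end{equation*}
So the entire task reduces to verifying that the expression in~\eqref{eq:correct} simplifies to $\hat{\mathbf{G}}_K^{\star}$.

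The key ingredients are already in place. From the recursion~\eqref{eq:recursive-C}, we have the identity $\mathbf{S}_K = \mathbf{F}_{1:K}^\top \mathbf{F}_{1:K} + K\beta\mathbf{I}$, obtained by summing $\mathbf{C}_k = \mathbf{F}_k^\top \mathbf{F}_k + \beta \mathbf{I}$ over $k$ and stacking the feature blocks. Meanwhile, Lemma~\ref{lemma1} provides
\begin{equation*}
\mathbf{M}_K = \mathbf{S}_K^{-1} \mathbf{F}_{1:K}^\top \mathbf{Y}_{1:K}.
\end{equation*}
Multiplying on the left by $\mathbf{S}_K$ immediately gives $\mathbf{S}_K \mathbf{M}_K = \mathbf{F}_{1:K}^\top \mathbf{Y}_{1:K}$, which handles the right-hand factor in~\eqref{eq:correct}.

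For the bracketed inverse, I would observe the crucial algebraic cancellation
\begin{equation*}
\mathbf{S}_K - (K-1)\beta\mathbf{I} = \mathbf{F}_{1:K}^\top \mathbf{F}_{1:K} + K\beta\mathbf{I} - (K-1)\beta\mathbf{I} = \mathbf{F}_{1:K}^\top \mathbf{F}_{1:K} + \beta \mathbf{I},
\end{equation*}
which is exactly the matrix being inverted in $\hat{\mathbf{G}}_K^{\star}$. This shows why the $-(K-1)\beta\mathbf{I}$ correction term appears in~\eqref{eq:correct}: it is precisely the amount needed to collapse the $K$ per-client regularizers back into the single global regularizer $\beta\mathbf{I}$ required by~\eqref{eq:global-1}. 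Combining the two identities yields $\hat{\mathbf{G}}_K = (\mathbf{F}_{1:K}^\top \mathbf{F}_{1:K} + \beta \mathbf{I})^{-1}\mathbf{F}_{1:K}^\top \mathbf{Y}_{1:K} = \hat{\mathbf{G}}_K^{\star}$, which completes the proof.

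Honestly, once Lemma~\ref{lemma1} is available, there is no real obstacle here, only bookkeeping; the only step that requires a moment of thought is noticing the over-regularization introduced by the recursive accumulation of $\mathbf{C}_k$ and identifying $(K-1)\beta\mathbf{I}$ as the correct offset. Invertibility of both matrices in question is guaranteed for $\beta > 0$, so the derivation is valid without additional assumptions.
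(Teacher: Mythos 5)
Your proposal is correct and follows essentially the same route as the paper's own proof: compute the ridge-regression minimizer of~\eqref{eq:global-1}, invoke Lemma~\ref{lemma1} to obtain $\mathbf{S}_K\mathbf{M}_K=\mathbf{F}_{1:K}^\top\mathbf{Y}_{1:K}$, and observe that $\mathbf{S}_K-(K-1)\beta\mathbf{I}=\mathbf{F}_{1:K}^\top\mathbf{F}_{1:K}+\beta\mathbf{I}$. In fact your statement of the cross-term identity is cleaner than the paper's, which contains a typo ($\mathbf{F}_{1:K}$ where $\mathbf{Y}_{1:K}$ is meant) in~\eqref{proof-thm1-1}.
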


\begin{proof}
To establish the desired equivalence, we first derive the optimal solution of~\eqref{eq:global-1} through least squares by taking its partial derivative with respect to $\mathbf{G}_k$ and setting it to zero:
\begin{equation}
\hat{\mathbf{G}}_K = \left( \mathbf{F}_{1:K}^{\top}\mathbf{F}_{1:K} + \beta \mathbf{I} \right)^{-1} \mathbf{F}_{1:K}^\top \mathbf{Y}_{1:K}.
\label{least squares-1}
\end{equation}

Based on Lemma~\ref{lemma1}, the \textit{Knowledge Fusion Matrix} satisfies:
\begin{equation}
\mathbf{M}_K = \left( \mathbf{F}_{1:K}^\top \mathbf{F}_{1:K} + K\beta \mathbf{I} \right)^{-1} \mathbf{F}_{1:K}^\top \mathbf{Y}_{1:K}.
\end{equation}


Since $\mathbf{S}_K = \mathbf{F}_{1:K}^{\top}\mathbf{F}_{1:K} + K\beta \mathbf{I}$, we can obtain:
\begin{equation}
\left\{
\begin{aligned}
&\mathbf{S}_K - (K - 1) \beta \mathbf{I} = \mathbf{F}_{1:K}^{\top}\mathbf{F}_{1:K} + \beta \mathbf{I}.
\\
&\mathbf{S}_k \mathbf{M}_k = \mathbf{S}_k \mathbf{S}^{-1}_k \mathbf{F}_{1:K}^{\top}\mathbf{F}_{1:K} = \mathbf{F}_{1:K}^{\top}\mathbf{F}_{1:K}
\end{aligned}
\right.
\label{proof-thm1-1}
\end{equation}

By substituting this conclusion~\eqref{proof-thm1-1} into the formula~\eqref{eq:correct}, the closed-form expression of $\hat{\mathbf{G}}_K$ becomes:
\begin{equation}
\hat{\mathbf{G}}_K = \left( \mathbf{F}_{1:K}^{\top}\mathbf{F}_{1:K} + \beta \mathbf{I} \right)^{-1} \mathbf{F}_{1:K}^\top \mathbf{Y}_{1:K},
\end{equation}
proving the equivalence with the optimal
solution of \eqref{eq:global-1}.
\end{proof}
\begin{theorem}\label{thm:2}
The personalized model $\hat{\mathbf{G}}_k$ through~\eqref{eq:final_correction-p} in our FedHiP scheme is mathematically equivalent to the optimal solution of the empirical risk minimization objective 
within~\eqref{eq:person-1}.
\end{theorem}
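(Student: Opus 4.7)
My plan is to mirror the strategy used in the proof of Theorem \ref{thm:1}: derive the closed-form least squares solution of the personalization objective~\eqref{eq:person-1} directly, and then show that the expression in~\eqref{eq:final_correction-p} algebraically reduces to the same quantity after substituting the identities already established for $\mathbf{S}_K$ and $\mathbf{M}_K$. Because the objective~\eqref{eq:person-1} is a sum of quadratic terms in $\mathbf{P}_k$, the optimum is unique and expressible in closed form, so equivalence only requires matching both the ``inverse'' factor and the ``right-hand side'' factor.

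First I would compute the gradient of~\eqref{eq:person-1} with respect to $\mathbf{P}_k$ and set it to zero. The stationarity condition reads
\begin{equation}
\bigl(\mathbf{F}_{1:K}^{\top}\mathbf{F}_{1:K} + \alpha\mathbf{F}_k^{\top}\mathbf{F}_k + \beta\mathbf{I}\bigr)\mathbf{P}_k \;=\; \mathbf{F}_{1:K}^{\top}\mathbf{Y}_{1:K} + \alpha\mathbf{F}_k^{\top}\mathbf{Y}_k,
\end{equation}
so the optimal personalized model is
\begin{equation}
\hat{\mathbf{P}}_k^{\ast} \;=\; \bigl(\mathbf{F}_{1:K}^{\top}\mathbf{F}_{1:K} + \alpha\mathbf{F}_k^{\top}\mathbf{F}_k + \beta\mathbf{I}\bigr)^{-1}\bigl(\mathbf{F}_{1:K}^{\top}\mathbf{Y}_{1:K} + \alpha\mathbf{F}_k^{\top}\mathbf{Y}_k\bigr).
\end{equation}

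Next I would rewrite the two factors in~\eqref{eq:final_correction-p} using the algebraic identities already in hand. On the inverse side, using $\mathbf{S}_K = \mathbf{F}_{1:K}^{\top}\mathbf{F}_{1:K} + K\beta\mathbf{I}$ from the definition~\eqref{eq:recursive-C} and $\widetilde{\mathbf{C}}_k = \alpha\mathbf{F}_k^{\top}\mathbf{F}_k + \beta\mathbf{I}$ from~\eqref{eq:covariance_matrix-p}, the surplus $K\beta\mathbf{I}$ exactly cancels with the subtraction $-K\beta\mathbf{I}$, leaving $\mathbf{F}_{1:K}^{\top}\mathbf{F}_{1:K} + \alpha\mathbf{F}_k^{\top}\mathbf{F}_k + \beta\mathbf{I}$. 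On the right-hand side, Lemma~\ref{lemma1} yields $\mathbf{S}_K\mathbf{M}_K = \mathbf{F}_{1:K}^{\top}\mathbf{Y}_{1:K}$, so the bracket becomes $\mathbf{F}_{1:K}^{\top}\mathbf{Y}_{1:K} + \alpha\mathbf{F}_k^{\top}\mathbf{Y}_k$. Matching the two factors against $\hat{\mathbf{P}}_k^{\ast}$ completes the equivalence.

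The computation is fully routine; the only subtlety worth verifying carefully is the bookkeeping of the $\beta\mathbf{I}$ contributions. Since $\widetilde{\mathbf{C}}_k$ contains its own $\beta\mathbf{I}$ term, naively adding $\mathbf{S}_K$ would produce $(K+1)\beta\mathbf{I}$, which is why the explicit correction $-K\beta\mathbf{I}$ must appear in~\eqref{eq:final_correction-p} to recover the single regularizer $\beta\mathbf{I}$ called for by the objective. This is the step most prone to an off-by-one slip and is therefore the one I would double-check; everything else follows immediately from Lemma~\ref{lemma1} and the definitions.
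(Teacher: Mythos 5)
Your proposal is correct and follows essentially the same route as the paper's own proof: derive the ridge-regression stationarity condition for~\eqref{eq:person-1}, then match the inverse factor via $\mathbf{S}_K + \widetilde{\mathbf{C}}_k - K\beta\mathbf{I} = \mathbf{F}_{1:K}^{\top}\mathbf{F}_{1:K} + \alpha\mathbf{F}_k^{\top}\mathbf{F}_k + \beta\mathbf{I}$ and the right-hand factor via $\mathbf{S}_K\mathbf{M}_K = \mathbf{F}_{1:K}^{\top}\mathbf{Y}_{1:K}$ from Lemma~\ref{lemma1}. Your attention to the $\beta\mathbf{I}$ bookkeeping is exactly the point the paper's derivation also hinges on, so nothing is missing.
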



\begin{proof}
To establish the desired equivalence, we first derive the optimal solution of~\eqref{eq:person-1} through least squares by taking its partial derivative with respect to $\mathbf{P}_k$ and setting it to zero:
\begin{small}
\begin{equation*}
\hat{\mathbf{P}}_k = \left( \mathbf{F}_{1:K}^\top\mathbf{F}_{1:K} + \alpha\mathbf{F}_k^\top\mathbf{F}_k + \beta\mathbf{I} \right)^{-1} \left( \mathbf{F}_{1:K}^\top\mathbf{Y}_{1:K} + \alpha\mathbf{F}_k^\top\mathbf{Y}_k \right).
\label{eq:personalized_least_squares}
\vspace{-0.5mm}
\end{equation*}
\end{small}

Based on the established formula in~\eqref{eq:recursive-C} and~\eqref{eq:covariance_matrix-p}, we have:
\begin{equation}
\left\{
\begin{aligned}
\mathbf{S}_K &= \mathbf{F}_{1:K}^\top\mathbf{F}_{1:K} + K\beta\mathbf{I},  \\
\widetilde{\mathbf{C}}_k &= \alpha\mathbf{F}_k^\top\mathbf{F}_k + \beta\mathbf{I}. 
\end{aligned}
\right.
\label{eq:Ck_def}
\vspace{-1.0mm}
\end{equation}

Thus, we can further derive:
\begin{equation}
\mathbf{S}_K + \widetilde{\mathbf{C}}_k - K\beta\mathbf{I} = \mathbf{F}_{1:K}^\top\mathbf{F}_{1:K} + \alpha\mathbf{F}_k^\top\mathbf{F}_k + \beta\mathbf{I}.
\label{proof-thm2-1}
\vspace{-0.5mm}
\end{equation}



Combining the established conclusion in \eqref{proof-thm1-1}, we have
\begin{equation}
\mathbf{S}_K \mathbf{M}_K + \alpha \mathbf{F}_k^{\top}\mathbf{Y}_k = \mathbf{F}_{1:K}^\top\mathbf{Y}_{1:K} + \alpha\mathbf{F}_k^\top\mathbf{Y}_k.
\label{proof-thm2-2}
\vspace{-0.5mm}
\end{equation}

Thus, by substituting the conclusions~\eqref{proof-thm2-1} and~\eqref{proof-thm2-2} into the formula~\eqref{eq:final_correction-p}, the closed-form expression of $\hat{\mathbf{P}}_K$ becomes:
\begin{small}
\begin{equation*}
\hat{\mathbf{P}}_k = \left( \mathbf{F}_{1:K}^\top\mathbf{F}_{1:K} + \alpha\mathbf{F}_k^\top\mathbf{F}_k + \beta\mathbf{I} \right)^{-1} \left( \mathbf{F}_{1:K}^\top\mathbf{Y}_{1:K} + \alpha\mathbf{F}_k^\top\mathbf{Y}_k \right),
\end{equation*}
\end{small}
proving the equivalence with the optimal solution of~\eqref{eq:person-1}.
\end{proof}

Based on the obtained closed-form solutions, we then derive our FedHiP scheme's heterogeneity-invariance property below.

\begin{theorem}[\textbf{Heterogeneity-Invariance Property}]

Consider a federated system with $K$ clients, where the data distributions of the clients are denoted as
$\mathcal{D}_{1:K} = \{\mathcal{D}_1, \cdots, \mathcal{D}_k, \cdots, \mathcal{D}_K\}$. Let the personalized model for any client $k \in \{1, 2, \dots, K\}$ in our FedHiP scheme be denoted as $\hat{\mathbf{P}}_k= \hat{\mathbf{P}}(\mathcal{D}_{k}, \{\mathcal{D}_j\}_{j \neq k})$, which can be viewed as a function dependent on the clients' data distributions.
Next, consider an alternative distribution with diverse heterogeneity across the other clients while fixing the client $k$’s data $\mathcal{D}_k$, i.e.,
$\mathcal{D}'_{1:K} = \{\mathcal{D}'_1, \cdots, \mathcal{D}_k, \cdots, \mathcal{D}'_K\}$.
We denote personalized model for the client $k$ under this new configuration as $\hat{\mathbf{P}}'_k = \hat{\mathbf{P}}(\mathcal{D}_k, \{\mathcal{D}'_j\}_{j \neq k})$.
Then, we can always obtain that $\hat{\mathbf{P}}'_k$ is mathematically equivalent to $\hat{\mathbf{P}}_k$, i.e.,
$$
\hat{\mathbf{P}}(\mathcal{D}_k, \{\mathcal{D}_j\}_{j \neq k}) = \hat{\mathbf{P}}(\mathcal{D}_k, \{\mathcal{D}'_j\}_{j \neq k}).
$$

\end{theorem}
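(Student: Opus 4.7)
The plan is to reduce Theorem~3 to a direct inspection of the closed-form expression for $\hat{\mathbf{P}}_k$ already established in Theorem~2. Recall that under FedHiP the personalized model admits the explicit form
$$
\hat{\mathbf{P}}_k = \bigl( \mathbf{F}_{1:K}^\top\mathbf{F}_{1:K} + \alpha\mathbf{F}_k^\top\mathbf{F}_k + \beta\mathbf{I} \bigr)^{-1} \bigl( \mathbf{F}_{1:K}^\top\mathbf{Y}_{1:K} + \alpha\mathbf{F}_k^\top\mathbf{Y}_k \bigr).
$$
Every quantity on the right-hand side is either (i) a statistic of client $k$'s own dataset $\mathcal{D}_k$, or (ii) an aggregate over the stacked features/labels of all $K$ clients. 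Crucially, no term in the formula encodes \emph{how} the global pool indexed by $1{:}K$ is split among the individual clients $j \neq k$.

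Next, I would make precise what "alternative heterogeneity" $\mathcal{D}'_{1:K}$ means: the global training pool indexed by $1{:}K$ is held fixed, but its partition among the non-$k$ clients is allowed to vary arbitrarily. Under such a repartition, the concatenated matrices $\mathbf{F}_{1:K}$ and $\mathbf{Y}_{1:K}$ are identical to those under $\mathcal{D}_{1:K}$ up to a row permutation restricted to the rows belonging to clients $j \neq k$. Because the Gram-style products decompose as $\mathbf{F}_{1:K}^\top\mathbf{F}_{1:K} = \sum_{j=1}^{K} \mathbf{F}_j^\top\mathbf{F}_j$ and $\mathbf{F}_{1:K}^\top\mathbf{Y}_{1:K} = \sum_{j=1}^{K} \mathbf{F}_j^\top\mathbf{Y}_j$, both are sums of rank-one outer products over rows and therefore invariant under any such permutation. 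Hence the first inverse factor and the second multiplicand in the displayed formula coincide for $\mathcal{D}_{1:K}$ and $\mathcal{D}'_{1:K}$.

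The hypothesis $\mathcal{D}'_k = \mathcal{D}_k$ then guarantees that $\mathbf{F}_k^\top\mathbf{F}_k$ and $\mathbf{F}_k^\top\mathbf{Y}_k$ are also literally identical in the two configurations. Substituting these two observations into the closed form yields $\hat{\mathbf{P}}'_k = \hat{\mathbf{P}}_k$, which is the desired equality $\hat{\mathbf{P}}(\mathcal{D}_k, \{\mathcal{D}_j\}_{j \neq k}) = \hat{\mathbf{P}}(\mathcal{D}_k, \{\mathcal{D}'_j\}_{j \neq k})$.

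The main obstacle will be articulating the semantics of the alternative configuration cleanly, since the word "heterogeneity" in the theorem statement is colloquial. The content of the result is invariance to \emph{how the global pool is partitioned across clients} rather than invariance to changing the pool itself; once this is pinned down, the algebraic verification is a one-line substitution into the closed form of Theorem~2, and no delicate analysis of gradient trajectories, non-IID quantifiers, or convergence rates is required. This is precisely the structural advantage that the analytic solution buys over gradient-based PFL methods, where an analogous invariance is unattainable because stochastic gradient iterates depend on the per-client loss landscapes in a highly order- and partition-sensitive way.
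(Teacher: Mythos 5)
Your proposal is correct and follows essentially the same route as the paper's proof: both reduce the claim to the closed form of Theorem~2, observe that $\hat{\mathbf{P}}_k$ depends only on $\mathbf{F}_k^\top\mathbf{F}_k$, $\mathbf{F}_k^\top\mathbf{Y}_k$, $\mathbf{F}_{1:K}^\top\mathbf{F}_{1:K}$, and $\mathbf{F}_{1:K}^\top\mathbf{Y}_{1:K}$, and show the latter two are invariant because the alternative configuration is merely a repartition (row permutation) of the same global pool --- the paper writes this with an explicit permutation matrix $\mathbf{A}$ satisfying $\mathbf{A}^\top\mathbf{A}=\mathbf{I}$, while you phrase it via the per-client sum decomposition, which is the same argument. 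Your explicit caveat that the theorem's content is invariance to the \emph{partition} of a fixed pool rather than to changes of the pool itself matches the paper's own (somewhat buried) assumption that the two configurations share the same overall data.
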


\begin{proof}

Here, let $\mathbf{F}'_{1:K}$ and $\mathbf{Y}'_{1:K}$ denote the complete feature matrices extracted from $\mathcal{D}'_{1:K}$ via \eqref{eq:backbone-1} and the corresponding label matrices, respectively.
Since $\mathcal{D}_{1:K}$ and $\mathcal{D}'_{1:K}$ share the same overall data pool across clients but differ in their data distributions, $\mathcal{D}'_{1:K}$ can be viewed as a permutation of $\mathcal{D}_{1:K}$ with a different sample-level reordering.
Thus, we can obtain:
\begin{equation}
    \mathbf{F}'_{1:K} = \mathbf{A}\mathbf{F}_{1:K}, \quad \mathbf{Y}'_{1:K} = \mathbf{A}\mathbf{Y}_{1:K},
\vspace{-0.5mm}
\end{equation}
where $\mathbf{A}$ represents the associated permutation matrix that embodies the row transformation operation, satisfying $\mathbf{A}^\top \mathbf{A} = \mathbf{I}$.

As established in \textbf{Theorem 2}, the obtained personalized model $\hat{\mathbf{P}}(\mathcal{D}_k, \{\mathcal{D}_j\}_{j \neq k})$ is determined by four terms: $\mathbf{F}_{k}^\top\mathbf{F}_{k}$, $\mathbf{F}_{k}^\top\mathbf{Y}_{k}$, $\mathbf{F}_{1:K}^\top\mathbf{F}_{1:K}$, and $\mathbf{F}_{1:K}^\top\mathbf{Y}_{1:K}$.
Since the client $k$'s local data $\mathcal{D}_k$ is fixed, it follows that the first two terms are identical when comparing $\hat{\mathbf{P}}(\mathcal{D}_k, \{\mathcal{D}_j\}_{j \neq k})$ and $\hat{\mathbf{P}}(\mathcal{D}_k, \{\mathcal{D}'_j\}_{j \neq k})$.

By using the orthogonality of the permutation matrix $\mathbf{A}$, we prove the invariance of the \textit{auto-correlation term} $\mathbf{F}_{1:K}^\top\mathbf{F}_{1:K}$:
\begin{equation}
    \mathbf{F}_{1:K}^{'\top} \mathbf{F}'_{1:K} = \mathbf{F}_{1:K}^\top \mathbf{A}^\top \mathbf{A} \mathbf{F}_{1:K} = \mathbf{F}_{1:K}^\top\mathbf{F}_{1:K}.
\end{equation}

Similarly, we can further demonstrate the invariance of the \textit{cross-correlation term} $\mathbf{F}_{1:K}^\top\mathbf{Y}_{1:K}$ as follows:
\begin{equation}
    \mathbf{F}_{1:K}^{'\top} \mathbf{Y}'_{1:K} = \mathbf{F}_{1:K}^\top \mathbf{P}^\top \mathbf{P} \mathbf{Y}_{1:K} = \mathbf{F}_{1:K}^\top\mathbf{Y}_{1:K}.
\end{equation}

In summary, regardless of how the other clients' data distributions change, the four terms $\mathbf{F}_{k}^\top\mathbf{F}_{k}$, $\mathbf{F}_{k}^\top\mathbf{Y}_{k}$, $\mathbf{F}_{1:K}^\top\mathbf{F}_{1:K}$, and $\mathbf{F}_{1:K}^\top\mathbf{Y}_{1:K}$ all remain invariant.
Therefore, the personalized model obtained by our FedHiP scheme is also invariant.
\end{proof}

Moreover, we conduct an efficiency analysis of our FedHiP scheme, examining its running overhead of both computational and communication, as demonstrated below.

\textbf{(1) Computational Overhead:}

Our FedHiP scheme can achieve significant computational efficiency by relying on the lightweight forward-propagation rather than the burdensome back-propagation.
For each of the three phases, we analyze the computational overhead below.

\textbf{Analytic Local Training:} 
In this phase, each client $k$ first computes the \textit{Regularized Gram Matrix} $\mathbf{C}_k \in \mathbb{R}^{m \times m}$ using~\eqref{eq:covariance_matrix} with a computational complexity of $\mathcal{O}(m^2 N_k)$.
Subsequently, the client solves for its local model $\hat{\mathbf{L}}_k \in \mathbb{R}^{m \times d}$ using~\eqref{eq:local_solution-2} with a computational complexity of $\mathcal{O}(m^3 + N_k m d + m^2 d)$.

\textbf{Analytic Global Aggregation:} In this phase, the server recursively updates the cumulative matrix $\mathbf{S}_k \in \mathbb{R}^{m \times m}$ and the \textit{Knowledge Fusion Matrix} $\mathbf{M}_k \in \mathbb{R}^{m \times d}$ based on~\eqref{eq:recursive-C} and~\eqref{eq:recursive-weights}, which incur the complexities of $\mathcal{O}(m^2)$ and $\mathcal{O}(m^3 + m^2 d)$, respectively.
Subsequently, the final global model $\hat{\mathbf{G}}_k \in \mathbb{R}^{m \times d}$ is derived from~\eqref{eq:correct} with a complexity of $\mathcal{O}(m^3 + m^2 d)$.
The recursive process for aggregation is repeated for $K$ iterations.

\textbf{Analytic Local Personalization:} In this phase, each client constructs its weighted matrix $\widetilde{\mathbf{C}}_k \in \mathbb{R}^{m \times m}$ via~\eqref{eq:covariance_matrix-p} with a complexity of $\mathcal{O}(m^2 N_k)$, and subsequently computes its personalized model $\hat{\mathbf{P}}_k \in \mathbb{R}^{m \times d}$ according to~\eqref{eq:final_correction-p}, requiring a computational complexity of $\mathcal{O}(m^3 + N_k m d + m^2 d)$.

In summary, in our FedHiP scheme, each client has a total computational complexity of $\mathcal{O}(m^3 + N_k m d + m^2 d+ m^2 N_k)$, while the server incurs a complexity of $\mathcal{O}(Km^3 + Km^2 d)$.




\textbf{(2) Communication Overhead:}

By avoiding the iterative gradient-based updates via closed-form solutions in PFL, our FedHiP scheme achieves significant communication efficiency with only a single communication round.
Specifically, the communications between clients and the server involve both uploading and downloading:

\textbf{Uploading Communication:} After the local training, each client $k$ uploads its \textit{Regularized Gram Matrix} $\mathbf{C}_k \in \mathbb{R}^{m \times m}$ and local model $\hat{\mathbf{L}}_k \in \mathbb{R}^{m \times d}$ to the central server, with a per-client uploading complexity of $\mathcal{O}(m^2 + md)$.

\textbf{Downloading Communication:} After the global aggregation, the central server distributes the \textit{Cumulative Regularized Gram Matrix} $\mathbf{S}_K \in \mathbb{R}^{m \times m}$ and the \textit{Knowledge Fusion Matrix} $\mathbf{M}_K \in \mathbb{R}^{m \times d}$ to clients for personalization.
This transmission results in a per-client downloading complexity of $\mathcal{O}(m^2 + md)$.





\newpage

\begin{table*}[t]
\centering
\caption{Performance comparisons across various baselines.
The best result is highlighted in \textbf{bold}, and the second-best result is \underline{underlined}.
The improvement refers to the performance advantage of our FedHiP scheme compared to the second-best result.
} 
\renewcommand{\arraystretch}{1.2}{
\resizebox{1.0\textwidth}{!}{
\begin{NiceTabular}{@{}l|cccccc|cccccc@{}}
\toprule
\multicolumn{1}{c|}{\multirow{4}{*}{{Baseline}}} &
  \multicolumn{6}{c|}{CIFAR-100~\cite{dataset_CIFAR}} &
  \multicolumn{6}{c}{ImageNet-R~\cite{dataset_ImageNet-R}} \\ \cmidrule(l){2-13} 
\multicolumn{1}{c|}{} &
  \multicolumn{3}{c|}{50 Clients} &
  \multicolumn{3}{c|}{100 Clients} &
  \multicolumn{3}{c|}{50 Clients} &
  \multicolumn{3}{c}{100 Clients} \\ \cmidrule(l){2-13} 
\multicolumn{1}{c|}{} &
  $\lambda=0.1$ &
  $\lambda=0.5$ &
  \multicolumn{1}{c|}{$\lambda=1.0$} &
    $\lambda=0.1$ &
  $\lambda=0.5$ &
  \multicolumn{1}{c|}{$\lambda=1.0$} &
    $\lambda=0.1$ &
  $\lambda=0.5$ &
  \multicolumn{1}{c|}{$\lambda=1.0$} &
    $\lambda=0.1$ &
  $\lambda=0.5$ &
  $\lambda=1.0$ \\ 
  \midrule
FedAvg~\cite{FedAvg} & 43.53\% & 47.75\% & \multicolumn{1}{c|}{48.91\%} & 39.28\% & 43.31\% & \multicolumn{1}{c|}{43.28\%} & 6.30\% & 7.14\% & \multicolumn{1}{c|}{7.61\%} & 4.90\% & 4.60\% & 5.00\% \\
FedAvg+FT~\cite{FedAvg} & 58.81\% & 41.38\% & \multicolumn{1}{c|}{37.73\%} & 49.05\% & 31.37\% & \multicolumn{1}{c|}{26.91\%} & 20.42\% & 8.14\% & \multicolumn{1}{c|}{6.15\%} & 18.68\% & 7.52\% & 4.84\% \\
FedProx~\cite{FedProx} & 42.74\% & 46.73\% & \multicolumn{1}{c|}{48.19\%} & 38.37\% & 42.38\% & \multicolumn{1}{c|}{42.54\%} & 6.28\% & 7.36\% & \multicolumn{1}{c|}{7.21\%} & 4.85\% & 4.50\% & 4.99\% \\
FedProx+FT~\cite{FedProx} & 57.63\% & 41.32\% & \multicolumn{1}{c|}{37.73\%} & 48.64\% & 31.38\% & \multicolumn{1}{c|}{26.91\%} & 20.40\% & 8.11\% & \multicolumn{1}{c|}{6.13\%} & 18.70\% & 7.53\% & 4.82\% \\
Ditto~\cite{Ditto} & 69.80\% & 50.18\% & \multicolumn{1}{c|}{43.72\%} & 66.80\% & 41.32\% & \multicolumn{1}{c|}{33.46\%} & 25.35\% & 8.57\% & \multicolumn{1}{c|}{6.05\%} & 22.85\% & 7.88\% & 5.10\% \\
FedALA~\cite{FedALA} & \underline{72.89\%} & \underline{52.60\%} & \multicolumn{1}{c|}{46.42\%} & \underline{70.63\%} & \underline{47.11\%} & \multicolumn{1}{c|}{38.20\%} & 32.01\% & 12.29\% & \multicolumn{1}{c|}{8.86\%} & \underline{28.20\%} & 10.15\% & 5.91\% \\
FedDBE~\cite{FedDBE} & 48.52\% & 49.96\% & \multicolumn{1}{c|}{\underline{50.11\%}} & 43.20\% & 44.58\% & \multicolumn{1}{c|}{\underline{44.41\%}} & 9.20\% & 8.46\% & \multicolumn{1}{c|}{8.31\%} & 7.40\% & 5.61\% & 5.78\% \\
FedAS~\cite{FedAS} & 56.70\% & 41.37\% & \multicolumn{1}{c|}{37.78\%} & 47.79\% & 30.73\% & \multicolumn{1}{c|}{27.28\%} & 20.32\% & 7.96\% & \multicolumn{1}{c|}{6.00\%} & 18.33\% & 7.37\% & 4.65\% \\
FedPCL~\cite{FedPCL} & 23.99\% & 11.33\% & \multicolumn{1}{c|}{12.52\%} & 25.83\% & 11.82\% & \multicolumn{1}{c|}{11.13\%} & 0.83\% & 0.88\% & \multicolumn{1}{c|}{1.13\%} & 1.72\% & 1.11\% & 1.23\% \\
FedSelect~\cite{FedSelect} & 72.46\% & 49.44\% & \multicolumn{1}{c|}{41.58\%} & 68.44\% & 42.71\% & \multicolumn{1}{c|}{34.41\%} & \underline{34.83\%} & \underline{14.94\%} & \multicolumn{1}{c|}{\underline{9.59\%}} & 28.11\% & \underline{11.01\%} & \underline{7.09\%} \\
\midrule
FedHiP (Ours) & \textbf{78.68\%} & \textbf{64.58\%} & \multicolumn{1}{c|}{\textbf{61.26\%}} & \textbf{76.92\%} & \textbf{63.23\%} & \multicolumn{1}{c|}{\textbf{60.08\%}} & \textbf{45.55\%} & \textbf{32.37\%} & \multicolumn{1}{c|}{\textbf{30.21\%}} & \textbf{43.77\%} & \textbf{31.55\%} & \textbf{28.06\%} \\
{{\textbf{Improvement $\uparrow$}}} & {{\textbf{5.79\%}}} & {\textbf{11.98\%}} & \multicolumn{1}{c|}{\textbf{11.15\%}} & {\textbf{6.29\%}} & {\textbf{16.12\%}} & \multicolumn{1}{c|}{\textbf{15.67\%}} & {\textbf{10.72\%}} & {\textbf{17.43\%}} & \multicolumn{1}{c|}{\textbf{20.62\%}} & {\textbf{15.57\%}} & {\textbf{20.54\%}} & {\textbf{20.97\%}} 
\\
\bottomrule
\end{NiceTabular}
}
}
\label{tab:overall}
\vspace{-0.22cm} 
\end{table*}

\begin{figure*}[!h]
  \centering
  \subfloat[Accuracy vs. Aggregation Round]{\includegraphics[width=0.6666\columnwidth]{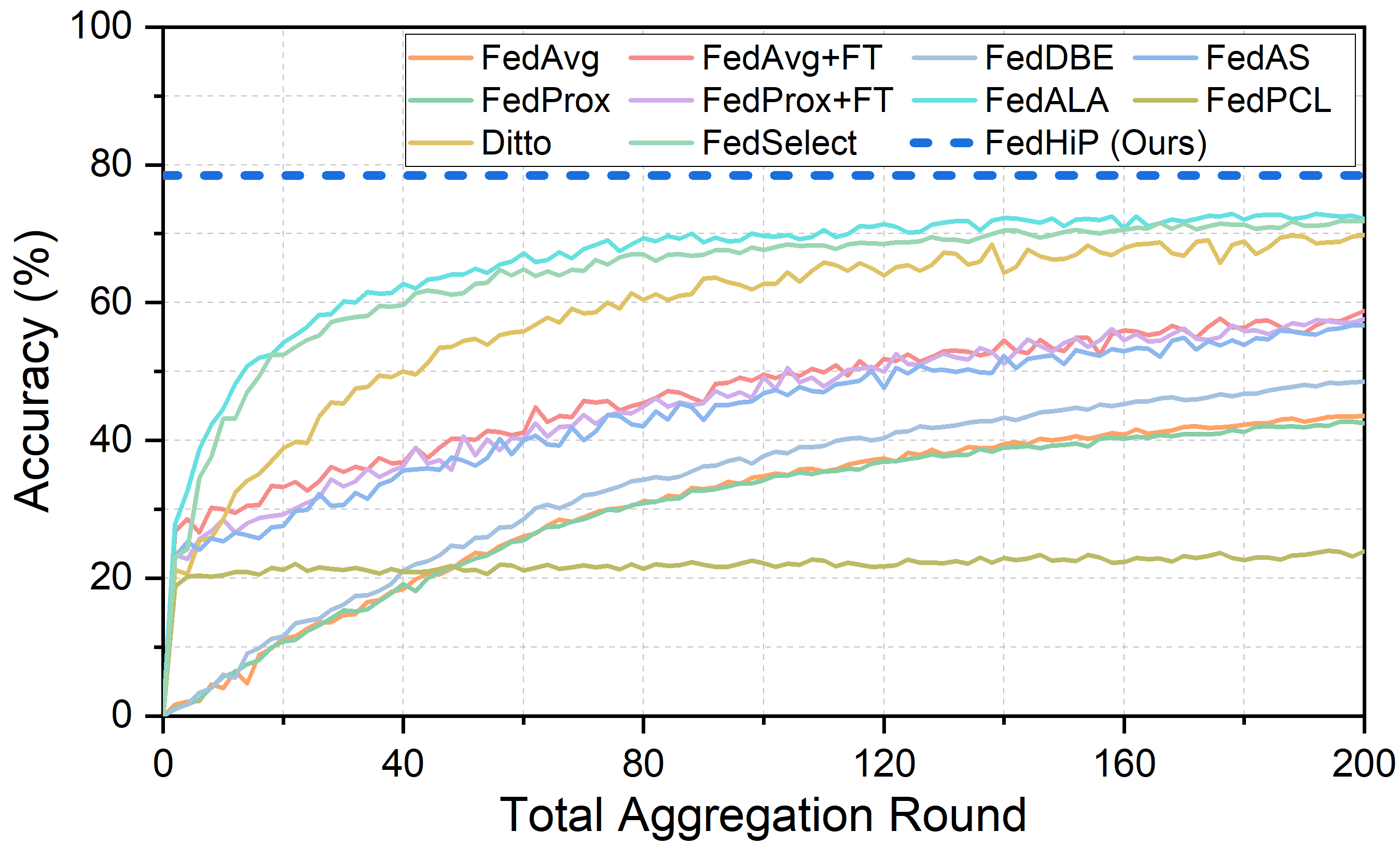}
  \label{fig:1a}}%
  \hfill
  \subfloat[Accuracy vs. Computation Overhead]{\includegraphics[width=0.6666\columnwidth]{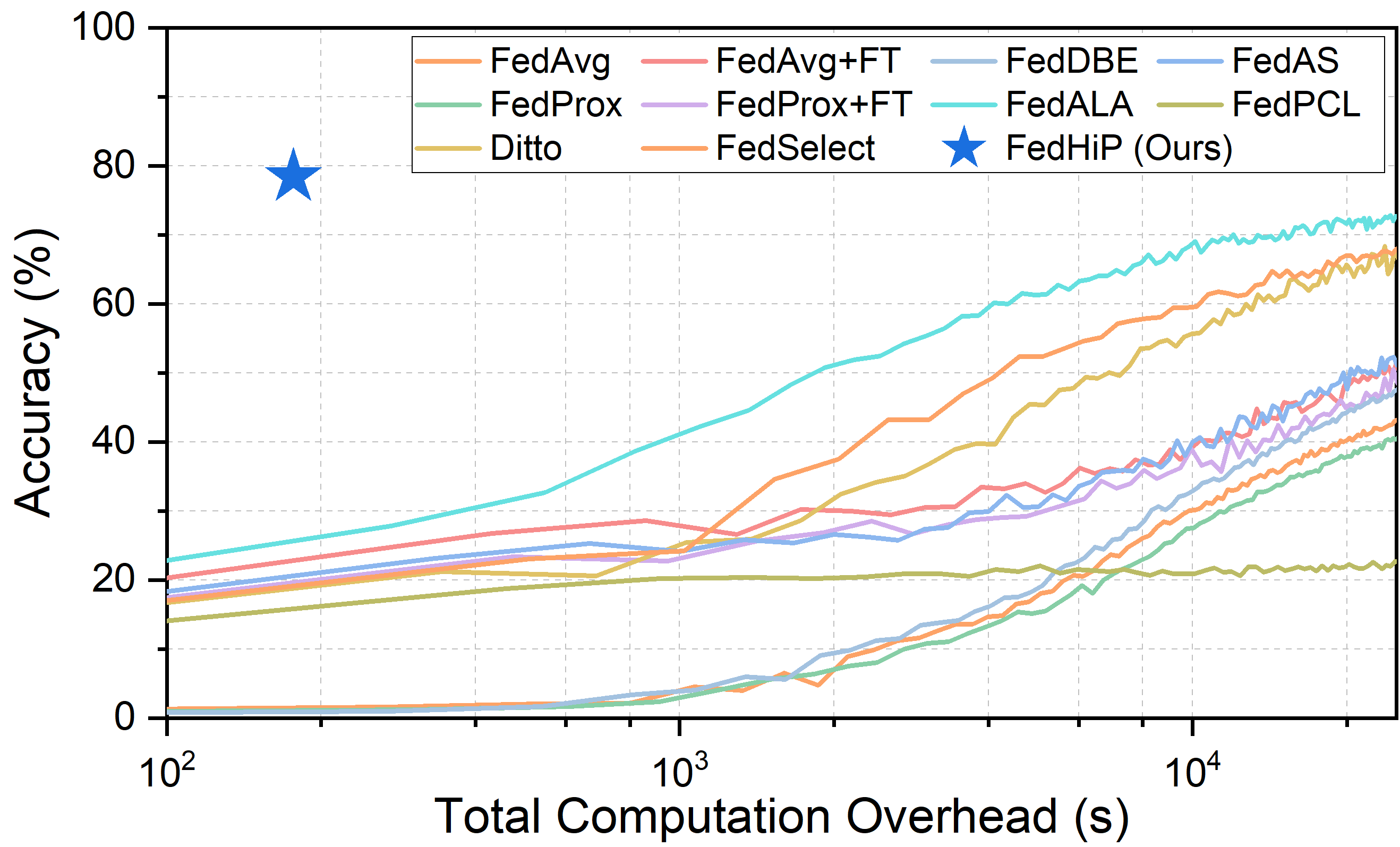}
  \label{fig:1b}}%
  \hfill
  \subfloat[Accuracy vs. Communication Overhead]{\includegraphics[width=0.6666\columnwidth]{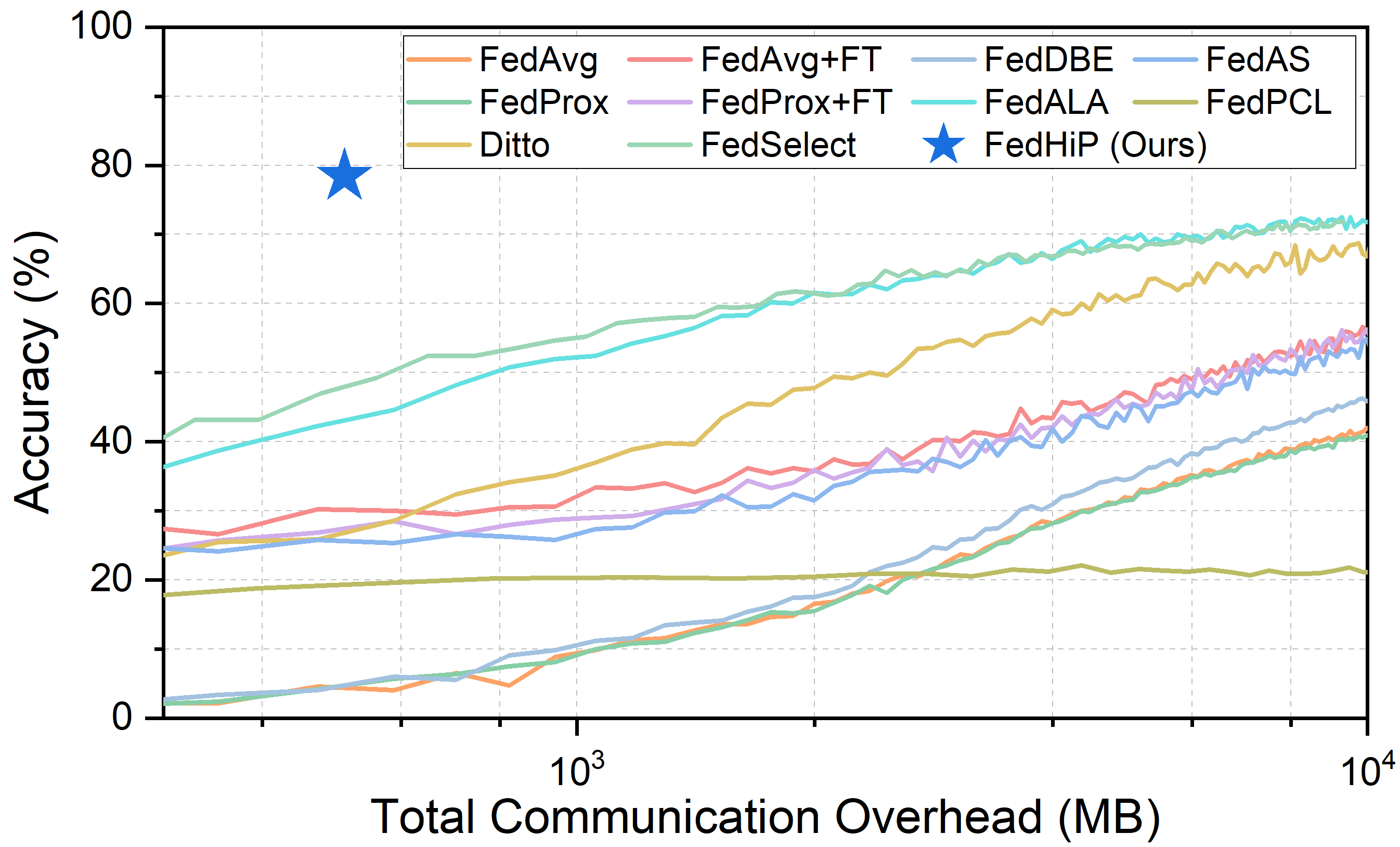}
  \label{fig:1c}}%
  \hfill
  \vspace{-0.12cm} 
  \caption{Efficiency evaluations on the CIFAR-100 dataset.}
  \label{fig:exp1}
    \vspace{-0.30cm} 
\end{figure*}

\begin{figure*}[!h]
  \centering
  \subfloat[Accuracy vs. Aggregation Round]{\includegraphics[width=0.6666\columnwidth]{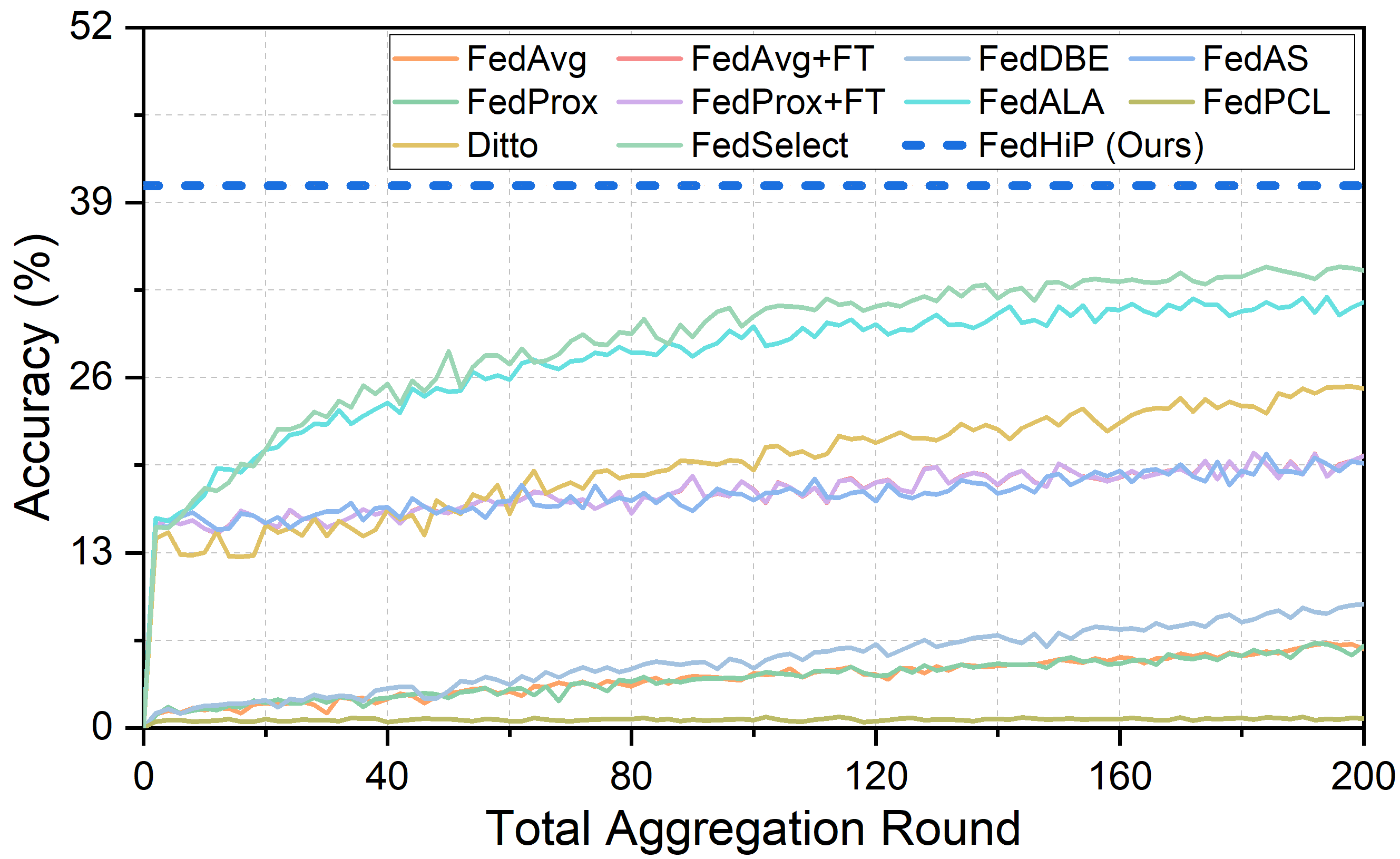}
  \label{fig:2a}}%
  \hfill
  \subfloat[Accuracy vs. Computation Overhead]{\includegraphics[width=0.6666\columnwidth]{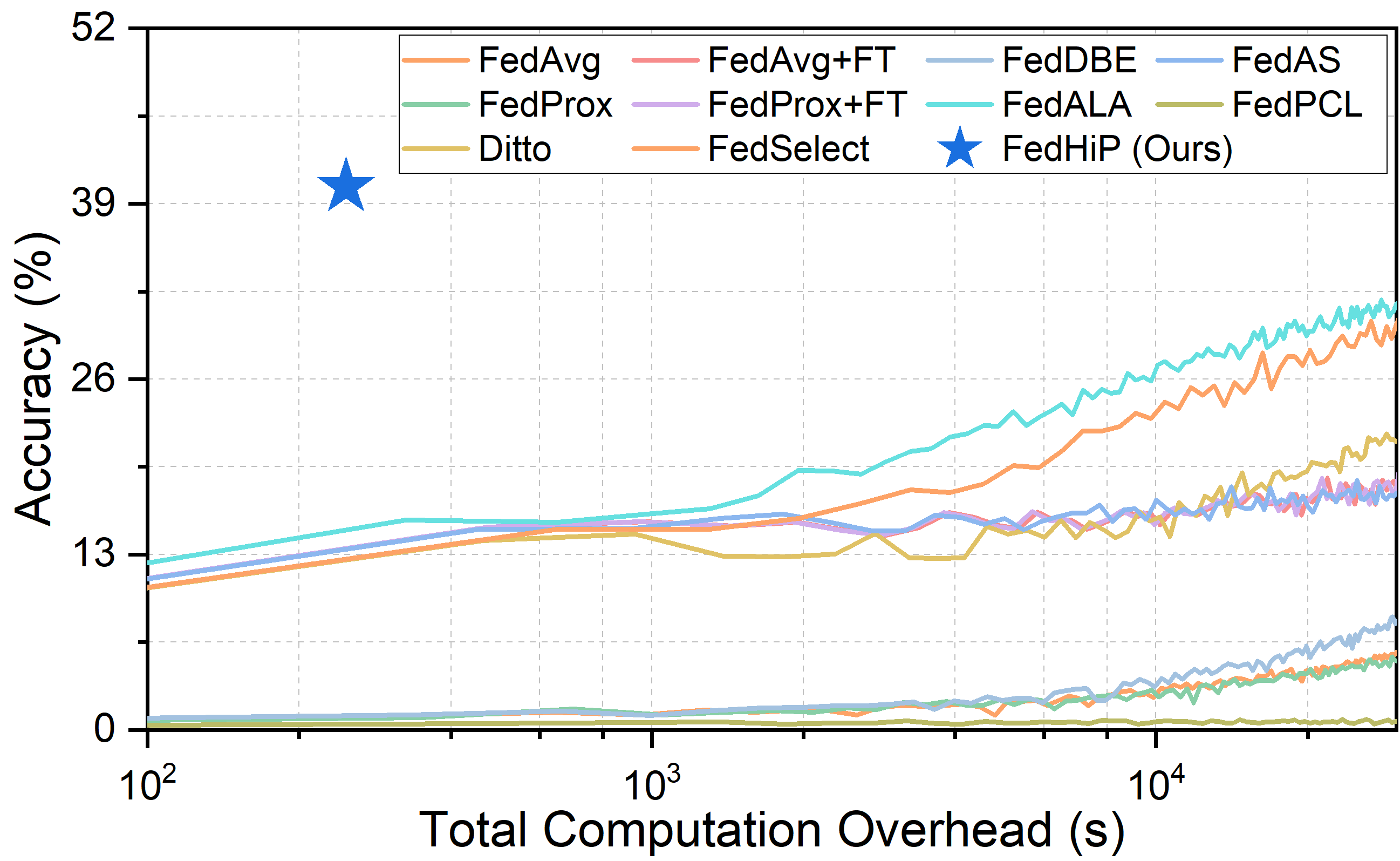}
  \label{fig:2b}}%
  \hfill
  \subfloat[Accuracy vs. Communication Overhead]{\includegraphics[width=0.6666\columnwidth]{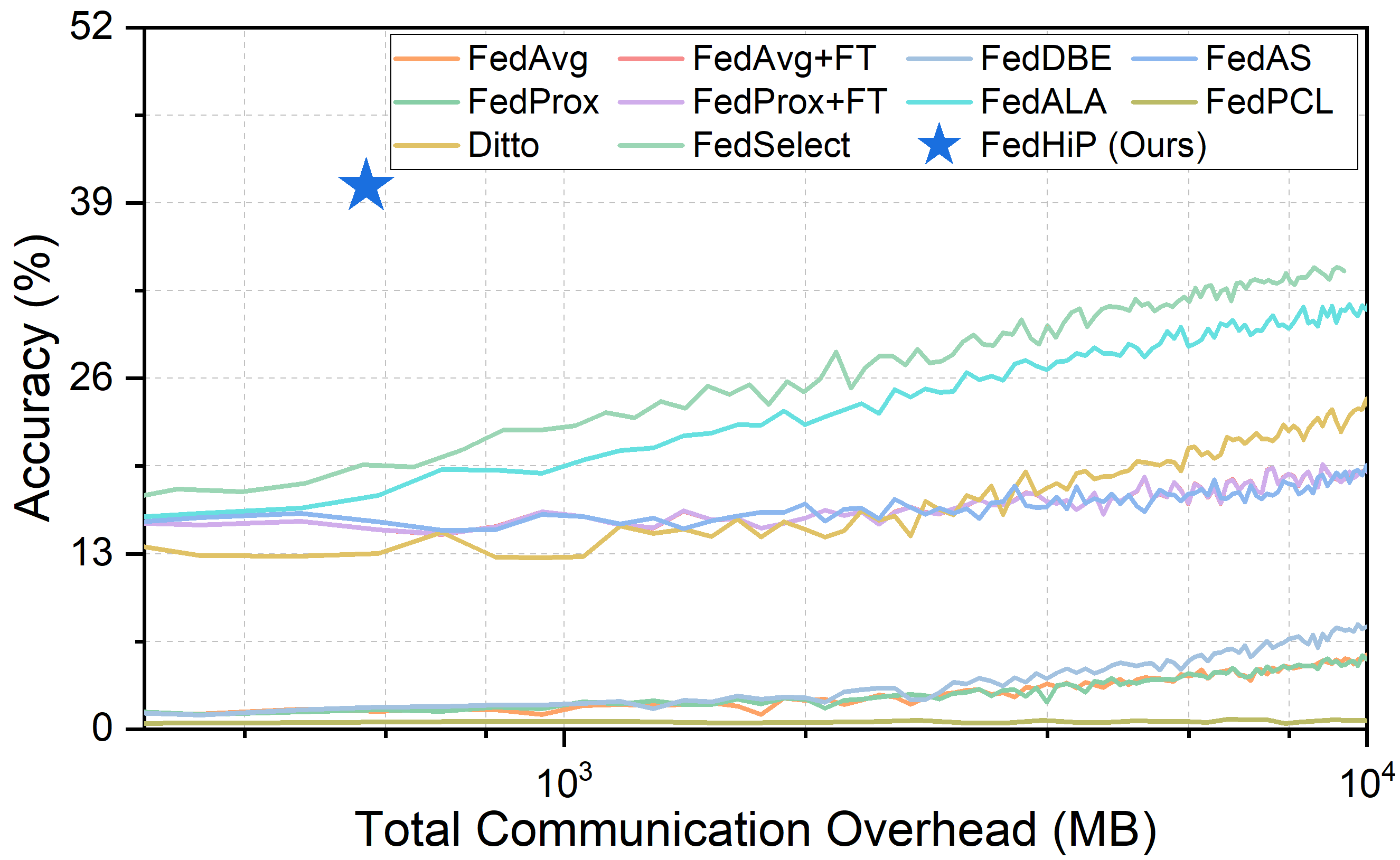}
  \label{fig:2c}}%
  \hfill
  \vspace{-0.12cm} 
  \caption{Efficiency evaluations on the ImageNet-R dataset.}
  \label{fig:exp2}
    \vspace{-0.47cm} 
\end{figure*}

\section{Experimental Evaluations}
\label{evaluation}

\subsection{Experimental Setup}

\subsubsection{Datasets and Settings}
To evaluate our FedHiP scheme's performance, we conduct our experiments on two challenging benchmark datasets: CIFAR-100~\cite{dataset_CIFAR} and ImageNet-R~\cite{dataset_ImageNet-R}.
We use the Dirichlet distribution with common concentration parameters $\lambda \in \{0.1, 0.5, 1.0\}$ to simulate varying degrees of non-IID environments.
Our experiments cover two settings with 50 and 100 clients.
To ensure experimental fairness, all methods utilize the same self-supervised pre-trained backbone, ViT-MAE-Base~\cite{ViT-MAE}, which can be sourced from a publicly accessible model.
Each client's dataset is partitioned into training and testing sets using an 8:2 split ratio.
All experiments are conducted on RTX 4090 GPUs using the PyTorch framework.

\subsubsection{Baselines and Metrics}
Here, we conduct comprehensive comparisons against diverse PFL baselines spanning multiple paradigms.
Traditional FL baselines include FedAvg~\cite{FedAvg} and FedProx~\cite{FedProx}, along with their fine-tuned variants (FedAvg+FT and FedProx+FT).
More recent PFL baselines comprise meta-learning method Ditto~\cite{Ditto}, knowledge-distillation method FedPCL~\cite{FedPCL}, and model-splitting methods including FedALA~\cite{FedALA}, FedDBE~\cite{FedDBE}, FedAS~\cite{FedAS}, and FedSelect~\cite{FedSelect}.
All baselines are based on gradients, using standard SGD for local training with a batch size of 32 and a learning rate of 0.005.
For each baseline, we employ 3 epochs for local training and 200 rounds for global communication across all experiments. 
As the core metric, we evaluate each method's final model on each client's local dataset and report the average accuracy.
Additionally, we also assess communication overhead through data transmission volume during uploading and downloading, and computational overhead through the total cost time across the entire method.


\subsection{Overall Comparisons}
TABLE~\ref{tab:overall} demonstrates the overall comparison results of our FedHiP scheme against various baselines. 
Overall, our FedHiP scheme consistently shows superior performance across a wide range of settings, including various datasets, different numbers of clients, and diverse levels of data heterogeneity.
Meanwhile, as $\lambda$ decreases, the heightened non-IID nature adversely affects the aggregation of global knowledge, leading to performance declines in traditional FL methods like FedAvg and FedProx.
Conversely, it also emphasizes personalization by altering local test datasets, thus improving the performance of PFL methods.

When the number of clients increases from 50 to 100, the baselines generally experience varying degrees of performance degradation.
In contrast, our FedHiP scheme maintains highly stable performance, as a benefit attributed to its heterogeneity-invariance property.
The slight performance fluctuations in our FedHiP scheme with varying client numbers are due to the randomness of the ViT-MAE backbone's patch sampling~\cite{ViT-MAE}.
Notably, despite our best efforts in hyperparameter tuning, the baselines consistently underperform on the very challenging ImageNet-R dataset.
Some even achieve below 10\% accuracy.
It highlights the limitations of gradient-based methods and, to some extent, explains why few PFL studies dare to conduct comparative analyses on such a challenging dataset in practice. 

\begin{table*}[t]
    \centering
    \renewcommand{\arraystretch}{1.2}
    \caption{Accuracy of our FedHiP scheme with various hyperparameter values of the term $\alpha$ on the CIFAR-100 dataset.}
    \label{table:sensitivity-1}
     \resizebox{1\textwidth}{!}{
        \begin{NiceTabular}{l c c c c c c c c c c c c c c c}
        \toprule
        \textbf{non-IID} & $\alpha = 0$ & $\alpha = 5$ & $\alpha = 10$ & $\alpha = 15$ & $\alpha = 20$ & $\alpha = 25$ & $\alpha = 30$ & $\alpha = 35$ & $\alpha = 40$ & $\alpha = 45$ & $\alpha = 50$ & $\alpha = 55$ & $\alpha = 60$\\
        \midrule
        $\lambda=0.1$ & 57.64\% & 73.48\% & 76.83\% & 78.1\% & 78.51\% & \textbf{78.68\%} & 78.65\% & 78.53\% & 78.47\% & 78.45\% & 78.31\% & 78.10\% & 77.97\% \\
        $\lambda=0.5$  & 57.32\% & 62.77\% & 64.36\% & \textbf{64.58\%} & 64.39\% & 63.96\% & 63.6\% & 63.16\% & 62.79\% & 62.29\% & 61.9\% & 61.43\% & 61.09\% 
        \\
        $\lambda=1.0$ & 57.58\%  & 60.12\% & 60.97\% & \textbf{61.26\%} & 60.94\% & 60.53\% & 59.78\% & 59.32\% & 58.45\% & 58.04\% & 57.88\% & 57.33\% & 56.94\% \\  
        \bottomrule
        \end{NiceTabular}
        }
    \vspace{0.1cm}
\end{table*}
\begin{table*}[t]
    \centering
    \renewcommand{\arraystretch}{1.2}
    \caption{Accuracy of our FedHiP scheme with various hyperparameter values of the term $\beta$ on the CIFAR-100 dataset.}
    \label{table:sensitivity-2}
     \resizebox{1\textwidth}{!}{
        \begin{NiceTabular}{ l c c c c c c c c c c c c c c}
        \toprule
        \textbf{non-IID} & $\beta = 0$ & $\beta = 5$ & $\beta = 10$ & $\beta = 15$ & $\beta = 20$ & $\beta = 25$ & $\beta = 30$ & $\beta = 35$ & $\beta = 40$ & $\beta = 45$ & $\beta = 50$ & $\beta = 55$ & $\beta = 60$ \\
        \midrule
        $\lambda=0.1$ & 78.51\% & \textbf{78.65\%} & 78.51\% & 78.53\% & 78.54\% & 78.51\% & 78.49\% & 78.37\% & 78.32\% & 78.23\% & 78.28\% & 78.22\% & 78.17\% \\
        $\lambda=0.5$ & 64.39\% & \textbf{64.43\%} & 64.30\% & 64.27\% & 64.17\% & 64.06\% & 63.95\% & 63.85\% & 63.77\% & 63.78\% & 63.75\% & 63.66\% & 63.60\% \\
        $\lambda=1.0$ & \textbf{60.94\%} & 60.88\% & 60.77\% & 60.54\% & 60.49\% & 60.38\% & 60.27\% & 60.17\% & 60.04\% & 59.96\% & 59.90\% & 59.86\% & 59.74\% \\
        \bottomrule
        \end{NiceTabular}
        }
    \vspace{0.084cm}
\end{table*}

\subsection{Efficiency Evaluations}
Subsequently, we conduct comprehensive evaluations of our FedHiP scheme against various baselines.
We demonstrate the results on the CIFAR-100 dataset and ImageNet-R dataset in Fig.~\ref{fig:exp1} and Fig.~\ref{fig:exp2}, respectively.
In the efficiency evaluations, we fix the number of clients at 50 and set the heterogeneity parameter $\lambda=0.1$ for all methods.
The evaluation results for both datasets include three subfigures. 
The subfigure (a) shows the convergence behavior of different baselines as a function of aggregation rounds.
The subfigures (b) and (c) demonstrate the corresponding computation and communication overhead for different methods to reach various accuracy values.


Let's first analyze the evaluation results in Fig.~\ref{fig:exp1}.
As shown in its subfigure (a), all baselines rely on multiple aggregation rounds to converge, while our FedHiP scheme only requires a single round of aggregation from each client.
Furthermore, our FedHiP scheme reaches an accuracy that surpasses the final convergence accuracy of all baselines, thereby achieving state-of-the-art performance.
Then, subfigures (b) and (c) more intuitively show our efficiency advantages in terms of computation and communication.
Specifically, our FedHiP scheme not only achieves an accuracy improvement of over 5\% compared to the baselines but also reduces computation and communication overheads by as much as 80\% and 95\%, respectively.



In Fig.~\ref{fig:exp2}, the obtained evaluation results are consistent overall with those shown in Fig.~\ref{fig:exp1}.
However, it's worth noting that during evaluations on the ImageNet-R dataset, many baselines exhibited non-convergence with very low accuracy, despite our best efforts in hyperparameter tuning. This evidence, on one hand, indicates that many PFL baselines struggle when applied to such a challenging dataset.
On the other hand, this evidence further highlights the advantages of our FedHiP scheme on the challenging dataset, as it can achieve absolute convergence and obtain state-of-the-art performance in a gradient-free manner.


\subsection{Sensitivity Analyses}

Last but not least, we further conducted detailed sensitivity analyses for our FedHiP scheme.
Specifically, according to our proposed optimization in~\eqref{eq:person-1}, our FedHiP scheme has only two hyperparameters: $\alpha$ to control the level of personalization, and $\beta$ to control the degree of regularization penalty.
The very few hyperparameters also free us from the hassle of tedious hyperparameter tuning when applying our scheme in practice.

Here, by fixing 50 clients and keeping the hyperparameter $\beta=0$, we adjust $\alpha$ to different values to observe its impact on the performance of our scheme, as presented in TABLE~\ref{table:sensitivity-1}.
The results indicate that the personalization term in~\eqref{eq:person-1} significantly impacts the performance of our scheme, and both excessively small or large values of $\alpha$ noticeably degrade its performance.
The optimal value of $\alpha$ is generally around 10 to 30, and this value is influenced by the heterogeneity parameter $\lambda$.
Notably, when considering the special case of $\alpha=0$, it can be viewed as an ablation study to validate the efficacy of the proposed phase 3 within our FedHiP scheme. 
This personalization phase can improve FedHiP's performance by up to {\color{black}{3.68\%-21.08\%.}}

Then, by fixing 50 clients and setting the hyperparameter $\alpha=20$, we adjust $\beta$ to different values to observe its impact on the performance of our scheme, as shown in TABLE~\ref{table:sensitivity-2}.
The results indicate that on the CIFAR-100 dataset, our FedHiP scheme appears to be not sensitive to the hyperparameter $\beta$.
This evidence suggests that the ridge regularization term in~\eqref{eq:person-1} has a minor impact on the overall performance of our scheme.
It brings good news: we can easily achieve good performance without laboriously tuning the hyperparameter $\beta$.


%


\section{Conclusion and Discussion}
\label{conclusion}
\subsection{Conclusion}
Identifying the inherent sensitivity of gradient-based updates to non-IID data, in this paper, we propose our FedHiP scheme to fundamentally address this issue by avoiding gradient-based updates via analytical solutions in PFL.
Specifically, we introduce a foundation model as a frozen backbone for gradient-free feature extraction, and develop an analytic classifier after the backbone for gradient-free training.
To achieve both collective generalization and individual personalization, we design a three-phase analytic framework for our FedHiP scheme.
Both theoretical analyses and extensive experiments validate the superiority of our FedHiP scheme.
To the best of our knowledge, we are the first to introduce the concept of analytic learning into PFL. 
Consequently, our FedHiP scheme also represents the pioneer to achieve the ideal property of heterogeneity invariance within the PFL domain.

\subsection{Discussion}
The principal limitation of our FedHiP scheme is its use of a frozen foundation model for feature extraction.
However, in the current landscape dominated by large AI models, this design is not overly restrictive, and many works have adopted similar approaches in PFL. 
In particular, the foundation model can be readily downloaded from an open-source repository, or trained centrally on public datasets.
Nevertherless, this minor limitation motivates us to further explore an adjustable backbone within our FedHiP scheme in the future, aiming to enhance the capability of personalized feature extraction.

Another limitation of our FedHiP scheme is that the single-layer analytic classifier solely captures the linear relationship between the features and the labels.
Despite this limitation, our FedHiP scheme already achieves state-of-the-art performance due to its ideal property of heterogeneity invariance, aided by a powerful feature extractor.
Besides, our FedHiP scheme holds promising potential for further enhancing its nonlinear classification capability by incorporating classic machine learning techniques, such as kernel methods and ensemble learning.
In the future, we will also focus on extending our FedHiP scheme to include multi-layer analytic classifier with better capability.




\clearpage

\bibliographystyle{IEEEtran}
\bibliography{main}

\begin{thebibliography}{10}
\providecommand{\url}[1]{#1}
\csname url@samestyle\endcsname
\providecommand{\newblock}{\relax}
\providecommand{\bibinfo}[2]{#2}
\providecommand{\BIBentrySTDinterwordspacing}{\spaceskip=0pt\relax}
\providecommand{\BIBentryALTinterwordstretchfactor}{4}
\providecommand{\BIBentryALTinterwordspacing}{\spaceskip=\fontdimen2\font plus
\BIBentryALTinterwordstretchfactor\fontdimen3\font minus \fontdimen4\font\relax}
\providecommand{\BIBforeignlanguage}[2]{{%
\expandafter\ifx\csname l@#1\endcsname\relax
\typeout{** WARNING: IEEEtran.bst: No hyphenation pattern has been}%
\typeout{** loaded for the language `#1'. Using the pattern for}%
\typeout{** the default language instead.}%
\else
\language=\csname l@#1\endcsname
\fi
#2}}
\providecommand{\BIBdecl}{\relax}
\BIBdecl

\bibitem{FL-1}
Y.~Liu, S.~Chang, Y.~Liu, B.~Li, and C.~Wang, ``{FairFed}: Improving fairness and efficiency of contribution evaluation in federated learning via cooperative shapley value,'' in \emph{IEEE INFOCOM 2024 - IEEE Conference on Computer Communications}, 2024, pp. 621--630.

\bibitem{FedAvg}
B.~McMahan, E.~Moore, D.~Ramage, S.~Hampson, and B.~A. y~Arcas, ``Communication-efficient learning of deep networks from decentralized data,'' in \emph{Artificial Intelligence and Statistics}.\hskip 1em plus 0.5em minus 0.4em\relax PMLR, 2017, pp. 1273--1282.

\bibitem{FedProx}
T.~Li, A.~K. Sahu, M.~Zaheer, M.~Sanjabi, A.~Talwalkar, and V.~Smith, ``Federated optimization in heterogeneous networks,'' \emph{Proceedings of Machine Learning and Systems}, vol.~2, pp. 429--450, 2020.

\bibitem{PFL-0}
A.~Z. Tan, H.~Yu, L.~Cui, and Q.~Yang, ``Towards personalized federated learning,'' \emph{IEEE Transactions on Neural Networks and Learning Systems}, vol.~34, no.~12, pp. 9587--9603, 2022.

\bibitem{Ditto}
T.~Li, S.~Hu, A.~Beirami, and V.~Smith, ``{Ditto}: Fair and robust federated learning through personalization,'' in \emph{International conference on Machine Learning}.\hskip 1em plus 0.5em minus 0.4em\relax PMLR, 2021, pp. 6357--6368.

\bibitem{FedALA}
J.~Zhang, Y.~Hua, H.~Wang, T.~Song, Z.~Xue, R.~Ma, and H.~Guan, ``{FedALA}: Adaptive local aggregation for personalized federated learning,'' in \emph{Proceedings of the AAAI conference on Artificial Intelligence}, vol.~37, no.~9, 2023, pp. 11\,237--11\,244.

\bibitem{PFL-3}
Y.~Sun, S.~Pan, A.~Sun, Z.~Fu, S.~Long, and Z.~Li, ``{FedLFP}: Communication-efficient personalized federated learning on non-iid data in mobile edge computing environments,'' \emph{IEEE Transactions on Mobile Computing}, pp. 1--13, 2025.

\bibitem{PFL-4}
X.~Zhou, Q.~Yang, X.~Zheng, W.~Liang, K.~I.-K. Wang, J.~Ma, Y.~Pan, and Q.~Jin, ``Personalized federated learning with model-contrastive learning for multi-modal user modeling in human-centric metaverse,'' \emph{IEEE Journal on Selected Areas in Communications}, vol.~42, no.~4, pp. 817--831, 2024.

\bibitem{PFL-5}
R.~Zhang, Y.~Chen, C.~Wu, F.~Wang, and B.~Li, ``Multi-level personalized federated learning on heterogeneous and long-tailed data,'' \emph{IEEE Transactions on Mobile Computing}, vol.~23, no.~12, pp. 12\,396--12\,409, 2024.

\bibitem{FL-3}
P.~Han, S.~Wang, Y.~Jiao, and J.~Huang, ``Federated learning while providing model as a service: Joint training and inference optimization,'' in \emph{IEEE INFOCOM 2024 - IEEE Conference on Computer Communications}, 2024, pp. 631--640.

\bibitem{a1}
J.~Tang, K.~Fan, S.~Yang, A.~Liu, N.~N. Xiong, H.~H. Song, and V.~C.~M. Leung, ``{CPDZ}: A credibility-aware and privacy-preserving data collection scheme with zero-trust in next-generation crowdsensing networks,'' \emph{IEEE Journal on Selected Areas in Communications}, vol.~43, no.~6, pp. 2183--2199, 2025.

\bibitem{a2}
Y.~Huang, J.~Guo, S.~Yang, J.~Liu, A.~Liu, J.~Tang, T.~Wang, M.~Dong, and H.~Song, ``{QLP-DCS}: A quality-aware, low-cost, and privacy-preserving data collection service for mobile crowd sensing,'' \emph{IEEE Transactions on Services Computing}, pp. 1--16, 2025.

\bibitem{a3}
K.~Fan, J.~Guo, R.~Li, Y.~Li, A.~Liu, J.~Tang, T.~Wang, M.~Dong, and H.~Song, ``{RMDF-CV}: A reliable multi-source data fusion scheme with cross validation for quality service construction in mobile crowd sensing,'' \emph{IEEE Transactions on Services Computing}, vol.~18, no.~1, pp. 399--413, 2025.

\bibitem{AL_1}
P.~Guo, M.~R. Lyu, and N.~Mastorakis, ``Pseudoinverse learning algorithm for feedforward neural networks,'' \emph{Advances in Neural Networks and Applications}, vol.~1, no. 321-326, 2001.

\bibitem{ACIL_2}
H.~Zhuang, Z.~Weng, R.~He, Z.~Lin, and Z.~Zeng, ``{GKEAL}: Gaussian kernel embedded analytic learning for few-shot class incremental task,'' in \emph{Proceedings of the IEEE/CVF Conference on Computer Vision and Pattern Recognition}, 2023, pp. 7746--7755.

\bibitem{ACIL_1}
H.~Zhuang, Z.~Weng, H.~Wei, R.~Xie, K.-A. Toh, and Z.~Lin, ``{ACIL}: Analytic class-incremental learning with absolute memorization and privacy protection,'' \emph{Advances in Neural Information Processing Systems}, vol.~35, pp. 11\,602--11\,614, 2022.

\bibitem{AL_new_0}
Z.~Cai, S.~Zhang, P.~Guo, J.~Zhang, and L.~Hu, ``A progressive stacking pseudoinverse learning framework via active learning in random subspaces,'' \emph{IEEE Transactions on Systems, Man, and Cybernetics: Systems}, vol.~54, no.~5, pp. 2822--2832, 2024.

\bibitem{AL_new_1}
Q.~Yin, B.~Xu, K.~Zhou, and P.~Guo, ``Bayesian pseudoinverse learners: From uncertainty to deterministic learning,'' \emph{IEEE Transactions on Cybernetics}, vol.~52, no.~11, pp. 12\,205--12\,216, 2022.

\bibitem{AL_2}
J.~Park and I.~W. Sandberg, ``Universal approximation using radial-basis-function networks,'' \emph{Neural Computation}, vol.~3, no.~2, pp. 246--257, 1991.

\bibitem{AL_3}
K.-A. Toh, ``Learning from the kernel and the range space,'' in \emph{2018 IEEE/ACIS 17th International Conference on Computer and Information Science (ICIS)}.\hskip 1em plus 0.5em minus 0.4em\relax IEEE, 2018, pp. 1--6.

\bibitem{AL_4}
X.-Z. Wang, T.~Zhang, and R.~Wang, ``Noniterative deep learning: Incorporating restricted boltzmann machine into multilayer random weight neural networks,'' \emph{IEEE Transactions on Systems, Man, and Cybernetics: Systems}, vol.~49, no.~7, pp. 1299--1308, 2017.

\bibitem{AL_CNN-2}
H.~Zhuang, Z.~Lin, Y.~Yang, and K.-A. Toh, ``An analytic formulation of convolutional neural network learning for pattern recognition,'' \emph{Information Sciences}, vol. 686, p. 121317, 2025.

\bibitem{AL_5}
J.~Wang, P.~Guo, and Y.~Li, ``Densepilae: a feature reuse pseudoinverse learning algorithm for deep stacked autoencoder,'' \emph{Complex \& Intelligent Systems}, pp. 1--11, 2022.

\bibitem{AL_6}
H.~Zhuang, Z.~Lin, and K.-A. Toh, ``Blockwise recursive moore--penrose inverse for network learning,'' \emph{IEEE Transactions on Systems, Man, and Cybernetics: Systems}, vol.~52, no.~5, pp. 3237--3250, 2021.

\bibitem{CALM}
K.~Fan, Y.~Huang, J.~He, F.~Han, J.~Tang, H.~Zhuang, A.~Liu, T.~Wang, M.~Dong, H.~H. Song, and Y.~Liu, ``{CALM}: A ubiquitous crowdsourced analytic learning mechanism for continual service construction with data privacy preservation,'' \emph{Proceedings of the ACM on Interactive, Mobile, Wearable and Ubiquitous Technologies}, vol.~9, no.~2, 2025.

\bibitem{AFL}
R.~He, K.~Tong, D.~Fang, H.~Sun, Z.~Zeng, H.~Li, T.~Chen, and H.~Zhuang, ``{AFL}: A single-round analytic approach for federated learing with pre-trained models,'' in \emph{Proceedings of the IEEE/CVF Conference on Computer Vision and Pattern Recognition}, 2025.

\bibitem{AL_RL}
Z.~Liu, C.~Du, W.~S. Lee, and M.~Lin, ``Locality sensitive sparse encoding for learning world models online,'' in \emph{The Twelfth International Conference on Learning Representations}, 2024.

\bibitem{ViT-MAE}
K.~He, X.~Chen, S.~Xie, Y.~Li, P.~Dollár, and R.~Girshick, ``Masked autoencoders are scalable vision learners,'' in \emph{2022 IEEE/CVF Conference on Computer Vision and Pattern Recognition}, 2022, pp. 15\,979--15\,988.

\bibitem{fl-pre-train-1}
J.~Nguyen, J.~Wang, K.~Malik, M.~Sanjabi, and M.~Rabbat, ``Where to begin? on the impact of pre-training and initialization in federated learning,'' in \emph{The Eleventh International Conference on Learning Representations}, 2023.

\bibitem{fl-pre-train-2}
H.-Y. Chen, C.-H. Tu, Z.~Li, H.~W. Shen, and W.-L. Chao, ``On the importance and applicability of pre-training for federated learning,'' in \emph{The Eleventh International Conference on Learning Representations}, 2023.

\bibitem{fl-pre-train-3}
Y.~Tian, Y.~Wan, L.~Lyu, D.~Yao, H.~Jin, and L.~Sun, ``{FedBERT}: When federated learning meets pre-training,'' \emph{ACM Transactions on Intelligent Systems and Technology}, vol.~13, no.~4, Aug. 2022.

\bibitem{FL-foundation-1}
S.~Yue, Z.~Qin, Y.~Deng, J.~Ren, Y.~Zhang, and J.~Zhang, ``{AugFL}: Augmenting federated learning with pretrained models,'' \emph{IEEE Transactions on Networking}, 2025.

\bibitem{fl-foundation-2}
H.~Chen, Y.~Zhang, D.~Krompass, J.~Gu, and V.~Tresp, ``{FedDAT}: An approach for foundation model finetuning in multi-modal heterogeneous federated learning,'' in \emph{Proceedings of the AAAI Conference on Artificial Intelligence}, vol.~38, no.~10, 2024, pp. 11\,285--11\,293.

\bibitem{dataset_CIFAR}
A.~Krizhevsky and G.~Hinton, ``Learning multiple layers of features from tiny images,'' \emph{Technical Report}, 2009.

\bibitem{dataset_ImageNet-R}
D.~Hendrycks, S.~Basart, N.~Mu, S.~Kadavath, F.~Wang, E.~Dorundo, R.~Desai, T.~Zhu, S.~Parajuli, M.~Guo \emph{et~al.}, ``The many faces of robustness: A critical analysis of out-of-distribution generalization,'' in \emph{Proceedings of the IEEE/CVF International Conference on Computer Vision}, 2021, pp. 8340--8349.

\bibitem{FedDBE}
J.~Zhang, Y.~Hua, J.~Cao, H.~Wang, T.~Song, Z.~XUE, R.~Ma, and H.~Guan, ``Eliminating domain bias for federated learning in representation space,'' \emph{Advances in Neural Information Processing Systems}, vol.~36, pp. 14\,204--14\,227, 2023.

\bibitem{FedAS}
X.~Yang, W.~Huang, and M.~Ye, ``{FedAS}: Bridging inconsistency in personalized federated learning,'' in \emph{Proceedings of the IEEE/CVF Conference on Computer Vision and Pattern Recognition}, 2024, pp. 11\,986--11\,995.

\bibitem{FedPCL}
Y.~Tan, G.~Long, J.~Ma, L.~Liu, T.~Zhou, and J.~Jiang, ``Federated learning from pre-trained models: A contrastive learning approach,'' \emph{Advances in neural information processing systems}, vol.~35, pp. 19\,332--19\,344, 2022.

\bibitem{FedSelect}
R.~Tamirisa, C.~Xie, W.~Bao, A.~Zhou, R.~Arel, and A.~Shamsian, ``{FedSelect}: Personalized federated learning with customized selection of parameters for fine-tuning,'' in \emph{Proceedings of the IEEE/CVF Conference on Computer Vision and Pattern Recognition}, 2024, pp. 23\,985--23\,994.

\end{thebibliography}

\clearpage

\begin{IEEEbiography}
[{\includegraphics[width=0.98in,height=1.2in,clip,keepaspectratio]{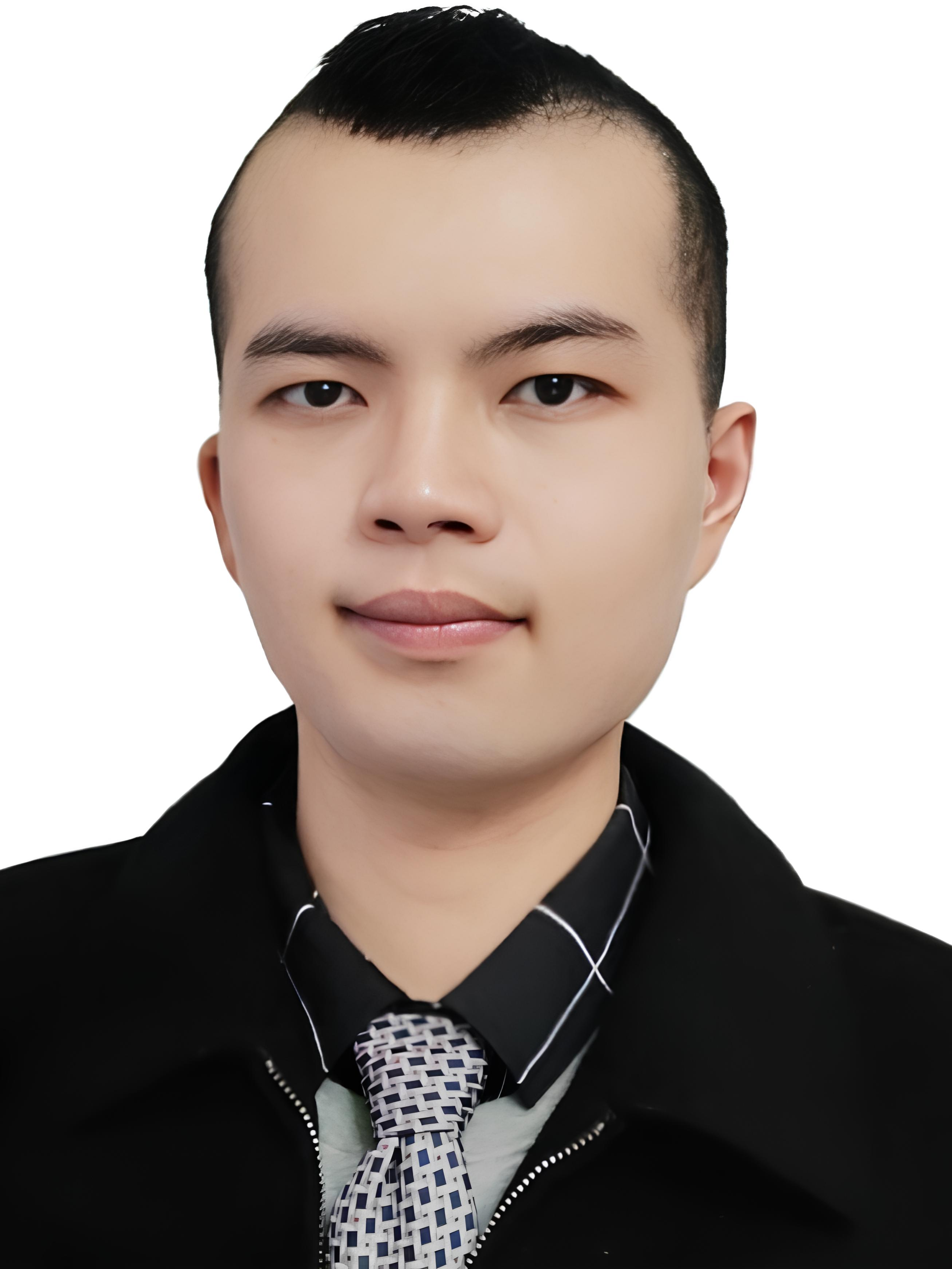}}]{Jianheng Tang}
is currently pursuing a Ph.D. degree at the School of Computer Science, Peking University, China. 
His research interests include machine learning, Internet of Things, and mobile computing.
\end{IEEEbiography}
\vspace{-0.5cm}

\begin{IEEEbiography}
[{\includegraphics[width=0.98in,height=1.2in,clip,keepaspectratio]{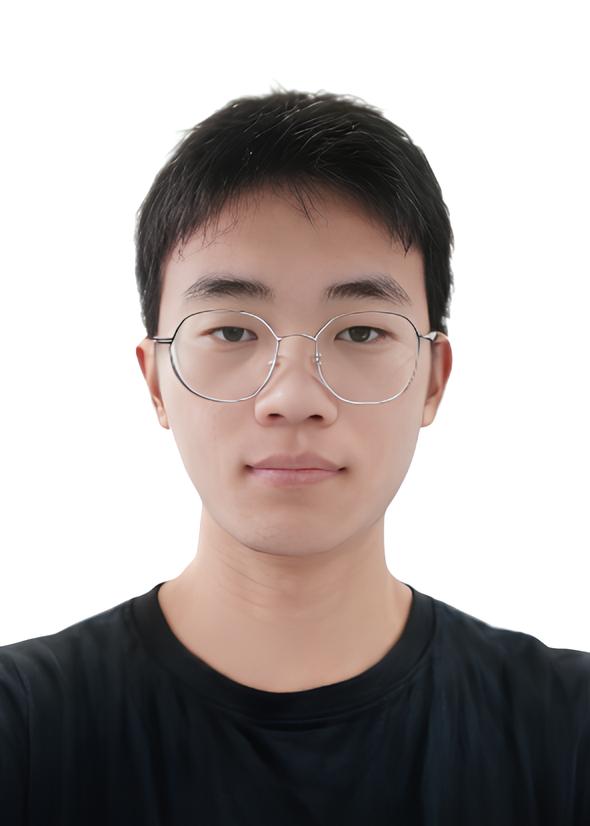}}]{Zhuirui Yang}
is currently a student at the School of Computer Science and Engineering, Central South University, China.
His major research interests include federated learning and Internet of Things.
\end{IEEEbiography}
\vspace{-0.5cm}

\begin{IEEEbiography}
[{\includegraphics[width=0.98in,height=1.2in,clip,keepaspectratio]{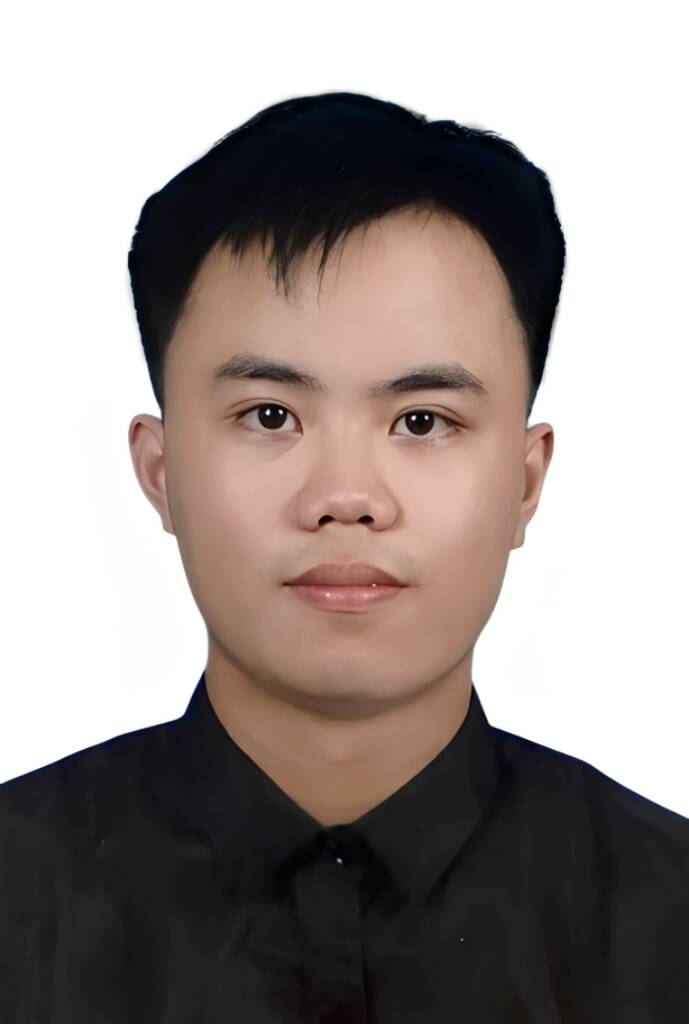}}]{Jingchao Wang}
is currently pursuing a Ph.D. degree at the School of Computer Science, Peking University, China. 
His research interests include artificial intelligence and machine learning.
\end{IEEEbiography}
\vspace{-0.5cm}

\begin{IEEEbiography}
[{\includegraphics[width=0.98in,height=1.2in,clip,keepaspectratio]{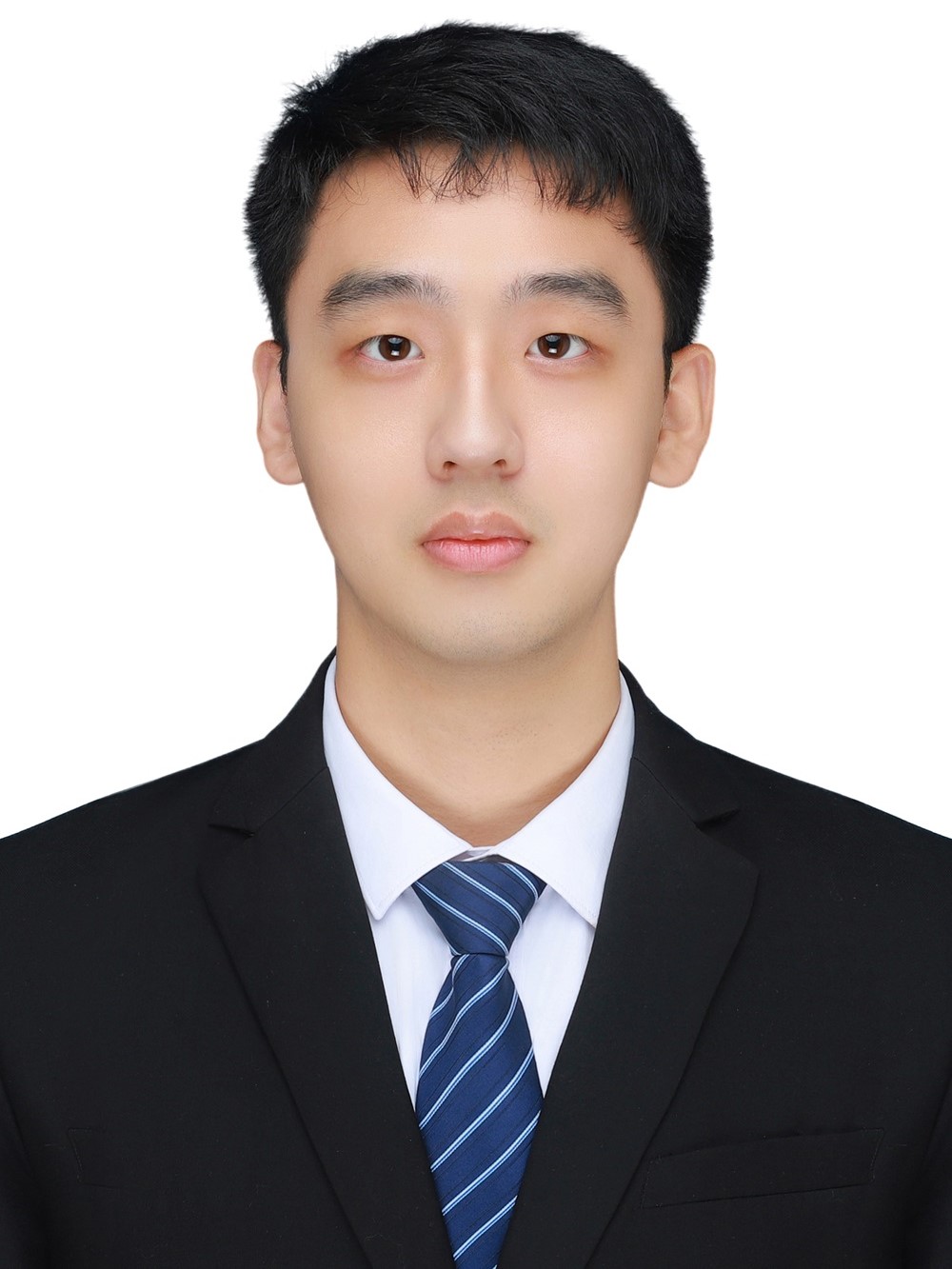}}]{Kejia Fan}
is currently pursuing his Master's degree at the School of Electronic Information, Central South University, China. 
His research interests include mobile crowd sensing and Internet of Things. 
\end{IEEEbiography}
\vspace{-0.5cm}

\begin{IEEEbiography}
[{\includegraphics[width=0.98in,height=1.2in,clip,keepaspectratio]{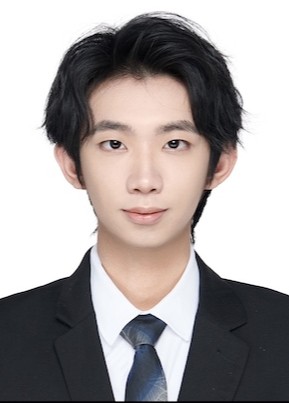}}]{Jinfeng Xu}
is currently a PhD Student of the Department of Electrical and Electronic Engineering at the University of Hong Kong.
His research interests include recommendation systems, data privacy, self-supervised learning, and federated learning.
\end{IEEEbiography}
\vspace{-0.5cm}

\begin{IEEEbiography}
[{\includegraphics[width=0.98in,height=1.2in,clip,keepaspectratio]{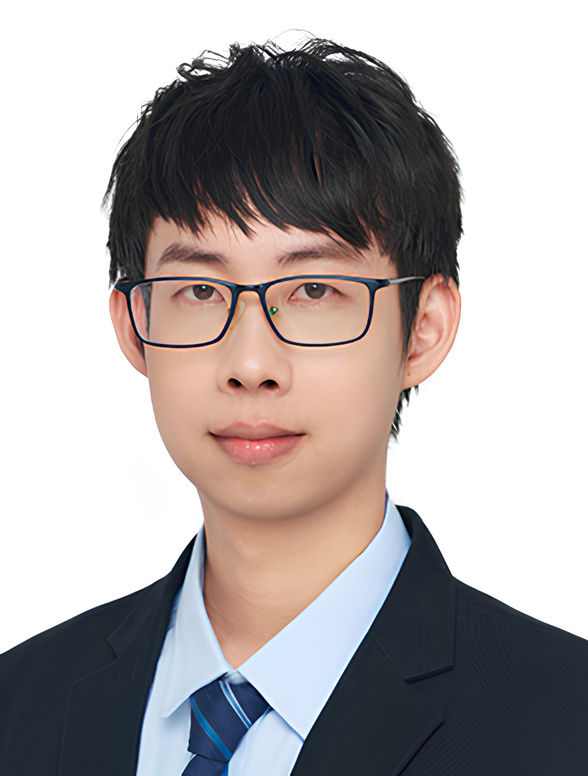}}]{Huiping Zhuang}\textit{\textbf{(Member, IEEE)}}
received the B.S. and M.E. degrees from South China University of Technology, Guangzhou, China, in 2014 and 2017, respectively, and the Ph.D. degree from Nanyang Technological University, Singapore, in 2021.
He is currently an Associate Professor with the Shien-Ming Wu School of Intelligent Engineering, South China University of Technology. 
His research interests include signal processing and machine learning.
\end{IEEEbiography}
\vspace{-0.5cm}

\begin{IEEEbiography}
[{\includegraphics[width=0.98in,height=1.2in,clip,keepaspectratio]{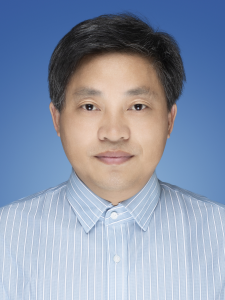}}]{Anfeng Liu}
received the M.Sc. and Ph.D. degrees from Central South University, China, in 2002 and 2005, respectively, both in computer science. 
He was a visiting scholar with the Broadband Communications Research (BBCR) Lab at the University of Waterloo in Canada from 2011 to 2012.
He is currently a professor at the School of Electronic Information, Central South University, China.
His major research interests include wireless sensor networks, Internet of Things, and mobile computing. 
Dr. Liu has published 4 books and over 300 international journal and conference papers with over 18,000 citations, among which there are more than 30 ESI highly-cited papers. 
Most of his works were published in premium conferences and journals, including IEEE JSAC, IEEE TMC, IEEE TPDS, IEEE TDSC, IEEE TIFS, IEEE TSC, IEEE TWC, etc.
His research has been supported by the National Basic Research Program of China
(973 Program) and the National Natural Science Foundation of China for six times.
\end{IEEEbiography}
\vspace{-0.5cm}

\begin{IEEEbiography}
[{\includegraphics[width=0.98in,height=1.2in,clip,keepaspectratio]{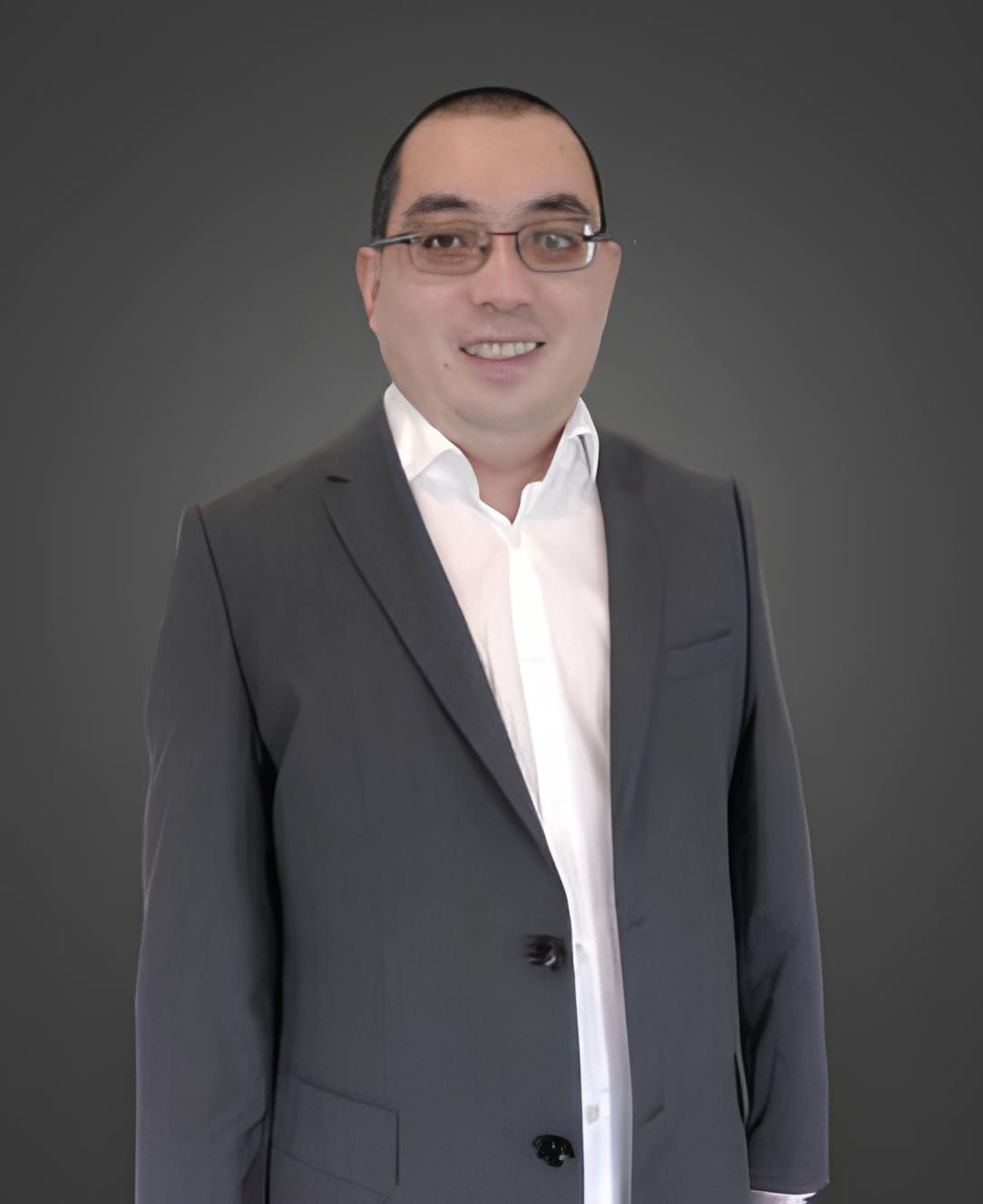}}]{Houbing Song}\textit{\textbf{(Fellow, IEEE)}}
received the Ph.D. degree in electrical engineering from the University of Virginia, Charlottesville, VA, in August 2012.
He is currently a Tenured Associate Professor, the Director of the NSF Center for Aviation Big Data Analytics (Planning), and the Associate Director for Leadership of the DoT Transportation Cybersecurity Center for Advanced Research and Education,
University of Maryland, Baltimore County (UMBC), Baltimore, MD.
He has served as one of the Co-Editors-in-Chief for IEEE TII, a Guest Editor for IEEE JSAC, and an Associate Editor for IEEE TAI, IEEE IOTJ, and IEEE TITS, etc.
He is an IEEE Fellow, an ACM Distinguished Member, an ACM Distinguished Speaker, and an IEEE Vehicular Technology Society Distinguished Lecturer.
He has been a Highly Cited Researcher identified by Clarivate, a Top 1000 Computer Scientist identified by Research.com, and an IEEE Impact Creator since 2023.
He received the IEEE Harry Rowe Mimno Award and 10+ Best Paper Awards from major international conferences. 
His research interests include cyber-physical systems and Internet of Things.
\end{IEEEbiography}
\vspace{-0.5cm}

\begin{IEEEbiography}
[{\includegraphics[width=0.98in,height=1.2in,clip,keepaspectratio]{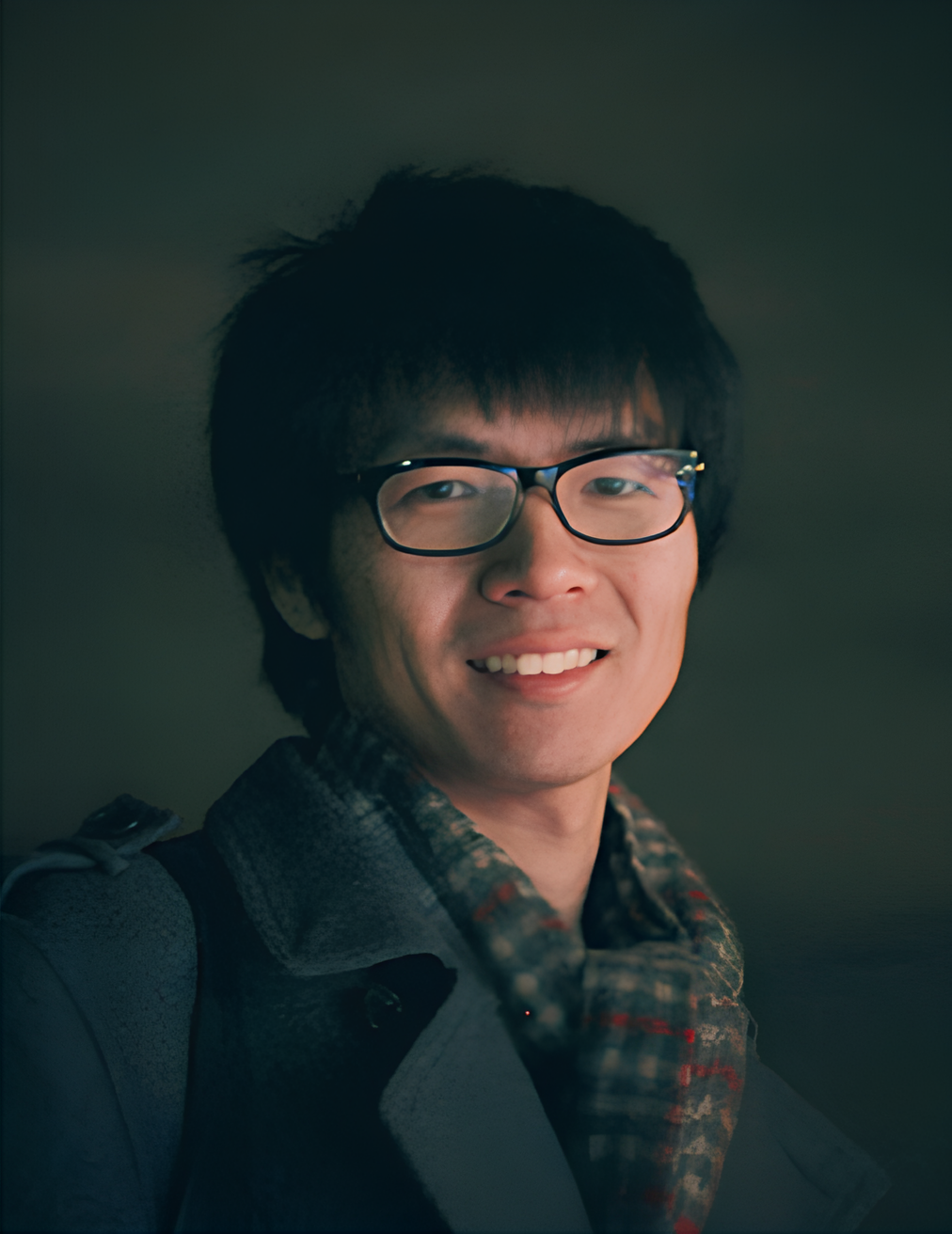}}]{Leye Wang}\textit{\textbf{(Member, IEEE)}}
received the PhD degree in computer science from TELECOM SudParis and University Paris 6, France, in 2016.
He is currently a tenured associate professor with the Key Lab of High Confidence Software Technologies, Peking University, Ministry of Education (MOE), and with the School of Computer Science, Peking University, China. He was a postdoctoral researcher with the Hong Kong University of Science and Technology. His research interests include ubiquitous computing, mobile crowdsensing, and urban computing.
\end{IEEEbiography}
\vspace{-0.5cm}

\begin{IEEEbiography}
[{\includegraphics[width=0.98in,height=1.2in,clip,keepaspectratio]{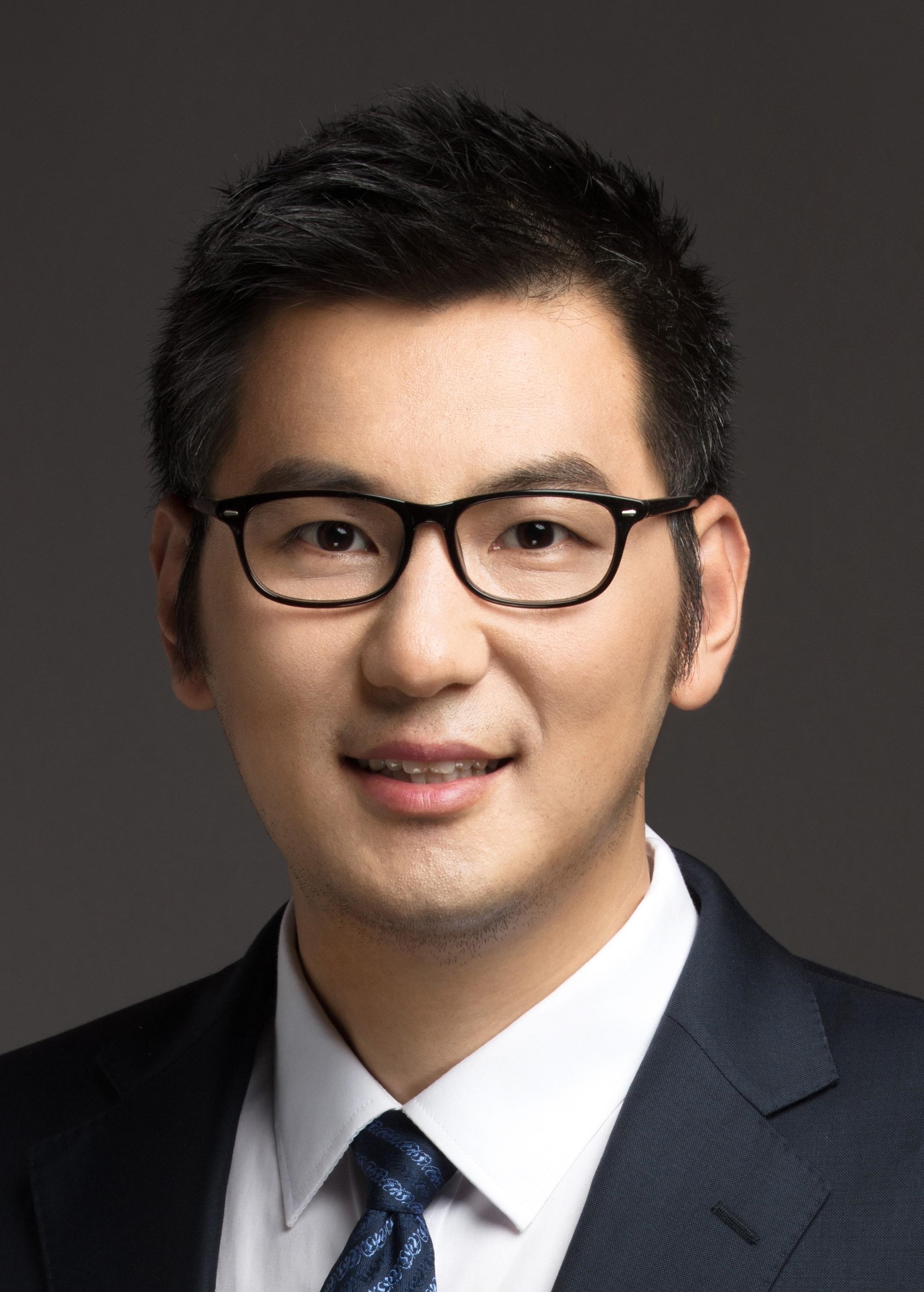}}]{Yunhuai Liu}\textit{\textbf{(Member, IEEE)}}
received his B.E degree in Computer Science and Technology from Tsinghua University, Beijing in 2000, and PhD degree in Computer Science and Engineering from Hong Kong University of Science and Technology in 2008.
He is currently a Full Professor at the School of Computer Science, Peking University, China.
He was a recipient of the National Talented Young Scholar Program (2013), and the National Distinguished Young Scholar Program (2019), funded by the National Science Foundation China (NSFC).
He has been honored with Best Paper Awards from ACL, IEEE ICDCS, and IEEE SANER.
His research interests include wireless networks, mobile computing, cyber-physical systems, and Internet of Things. 
Dr. Liu has served as an Associate Editor for prestigious journals such as IEEE TPDS, IEEE TNSE, and has been a TPC member for leading conferences including ACM Sensys and IEEE INFOCOM.
He is currently the Vice Chair of ACM China Council.
\end{IEEEbiography}

\end{document}